\documentclass{article}

\usepackage{lmodern}

\usepackage[utf8]{inputenc} 
\usepackage[T1]{fontenc}    
\usepackage{hyperref}       
\usepackage{url}            
\usepackage{booktabs}       
\usepackage{amsfonts}       
\usepackage{nicefrac}       
\usepackage{microtype}      
\usepackage[margin=1.20in]{geometry}

\usepackage{amsmath}
\usepackage{amsthm}
\usepackage{epsfig}
\usepackage{cite}
\usepackage{graphicx}
\graphicspath{{Figures/}}
\usepackage{subfigure}
\usepackage{url}
\usepackage{stfloats}
\usepackage[square, numbers]{natbib}
\usepackage{algorithm}
\usepackage{algpseudocode}
\usepackage{amsopn}    
\usepackage{adjustbox}
\usepackage{hyperref}
\usepackage{wrapfig}

\newtheorem{theorem}{\bf Theorem}
\newtheorem{rem}{\bf Remark}

\newtheorem{lemma}{\bf Lemma}

\newtheorem{prop}{\bf Proposition}

\newcommand{\kld}{\mathbb{D}}

\makeatletter
\def\hlinewd#1{%
  \noalign{\ifnum0=`}\fi\hrule \@height #1 \futurelet
   \reserved@a\@xhline}
\makeatother

\title{\bf CoNES: Convex Natural Evolutionary Strategies}

\author{\textbf{Sushant Veer}\\
Mechanical and Aerospace Engineering \\
Princeton University\\
\texttt{sveer@princeton.edu} \and \textbf{Anirudha Majumdar}\\
Mechanical and Aerospace Engineering \\
Princeton University\\
\texttt{ani.majumdar@princeton.edu}}

\date{}

\input{irom.sty}

\begin{document}

\maketitle

\begin{abstract}
We present a novel algorithm -- \emph{convex natural evolutionary strategies (CoNES)} -- for optimizing high-dimensional blackbox functions by leveraging tools from convex optimization and information geometry. CoNES is formulated as an efficiently-solvable convex program that adapts the evolutionary strategies (ES) gradient estimate to promote rapid convergence. The resulting algorithm is \emph{invariant} to the parameterization of the belief distribution. Our numerical results demonstrate that CoNES vastly outperforms conventional blackbox optimization methods on a suite of functions used for benchmarking blackbox optimizers. Furthermore, CoNES demonstrates the ability to converge faster than conventional blackbox methods on a selection of OpenAI's MuJoCo reinforcement learning tasks for locomotion.
\end{abstract}

\section{Introduction}

Policy optimization in reinforcement learning (RL) can be posed as a blackbox optimization problem: given access to a ``blackbox'' in the form of a simulator or robot hardware, find a setting of policy parameters that maximizes rewards. This perspective has led to significant recent interest from the RL community towards scaling blackbox optimization methods and has catapulted the use of blackbox optimizers from low-dimensional hyperparameter tuning \cite{Golovin17,Hutter19} to training deep neural networks (DNNs) with thousands of parameters \cite{Salimans17,Choromanski19a,Choromanski19b,Liu19,Conti18,Mania18}. Despite these promising advances, the sample complexity of blackbox methods remains high and is the subject of ongoing research. 

In this paper we study a class of blackbox optimization methods called evolutionary strategies (ES) \cite{Rechenberg73,Salimans17}. ES methods maintain a belief distribution on the domain of candidates. At each iteration, a batch of candidates is sampled from this distribution and their fitness is evaluated. These fitness scores are used to obtain a Monte-Carlo (MC) estimate of the loss function's gradient with respect to the parameters of the belief distribution. In the domain of ES for RL, approaches that adapt the sampling rate from the belief distribution and reuse samples from previous iterations have been proposed to improve the sample complexity \cite{Choromanski19a,Choromanski19b}. However, standard ES methods are not invariant to re-parameterizations of the belief distribution. Hence, the choice of belief parameterization (e.g., encoding the covariance as a symmetric positive definite matrix vs. a Cholesky decomposition) can affect the rate of convergence and cause undesirable behavior (e.g., oscillations) \cite{Wierstra2014}. In contrast, ES techniques based on the \emph{natural gradient} \cite{Amari98,Sun09,Wierstra2014} are \emph{parameterization invariant} and can demonstrate improved sample efficiency. However, these methods have not been thoroughly exploited in RL due to the difficulties in computing the natural gradient for high-dimensional problems; in particular, the challenging estimation of the Fisher information matrix is necessary for computing the natural gradient. 

In this paper, we present a novel algorithm -- \emph{convex natural evolutionary strategies (CoNES)} -- that leverages results on the natural gradient \cite{Amari98,Sun09,Wierstra2014} from information geometry \cite{Amari16} and couples them with powerful tools from convex optimization (e.g., second-order cone programming \cite{Boyd04} and geometric programming \cite{Boyd07}) to promote rapid convergence. 
In particular, CoNES refines a crude gradient estimate by transforming it through a convex program that searches for the direction of steepest ascent in a KL-divergence ball around the current belief distribution. 
The relationship to natural evolutionary strategies (NES) \cite{Wierstra2014} comes from the fact that the limiting solution of the KL-constrained optimization problem (as the ``radius'' of the KL-divergence ball shrinks to zero) corresponds to the natural gradient. However, in contrast to NES \cite{Wierstra2014}, CoNES circumvents the estimation of the Fisher information matrix by directly solving the convex KL-constrained optimization problem. 
\begin{wrapfigure}{r}{0.45\textwidth}
    \begin{center}
        \vspace{-5mm}
        \includegraphics[trim=70 55 75 110,clip,width=0.4\textwidth]{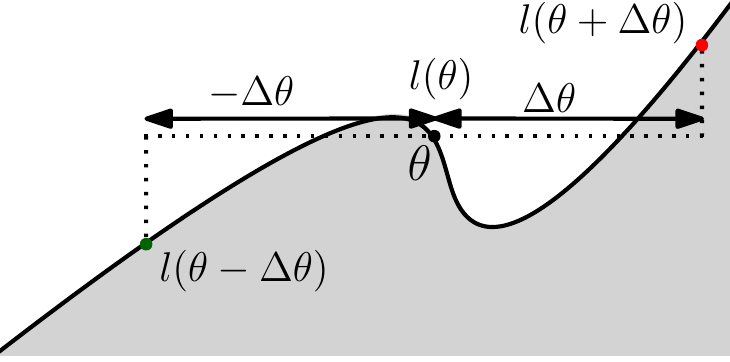}
    \end{center}
    \vspace{-3mm}
    \caption{\small Illustration demonstrating the importance of accounting for the step length for choosing the update direction. At the belief distribution expressed in the coordinates $\theta$, if we follow the negative of the gradient direction (right), then, with the step size $\Delta \theta$, the loss increases. However, accounting for the step size while choosing the direction, we would go left and the loss would decrease. \label{fig:step-length}}
    \vspace{-3.5mm}
\end{wrapfigure}
Furthermore, tuning the radius of the KL-divergence ball facilitates better alignment of the update direction with the update step size, yielding faster convergence than NES (which provides the steepest ascent direction for infinitesimal steps lengths); see Fig.~\ref{fig:step-length} for an illustration that demonstrates the importance of accounting the step length for choosing the update direction.

Our theoretical results establish that CoNES is \emph{invariant} to the parameterization of the belief distribution (e.g., encoding the covariance as a symmetric positive definite matrix or a Cholesky decomposition does not affect the solution of the CoNES optimization problem). Parameterization invariance ensures that we are working with the intrinsic mathematical object (i.e., probability distribution) and the specific encoding of these objects do not affect the outcome. Moreover, CoNES is agnostic to the method that generates the crude gradient estimate and can thus be potentially combined with various existing ES methods, such as \cite{Salimans17,Choromanski19a,Choromanski19b}.  Through our numerical results we demonstrate that CoNES vastly outperforms various conventional blackbox optimizers on a suite of 5000-dimensional benchmark functions for blackbox optimizers: \texttt{Sphere, Rosenbrock, Rastrigin}, and \texttt{Lunacek}. We also demonstrate the improved sample complexity achieved by CoNES on the following OpenAI MuJoCo RL tasks: \texttt{HalfCheetah-v2, Walker2D-v2, Hopper-v2}, and \texttt{Swimmer-v2}.

\section{Related Work}

{\bf Blackbox optimization.} Various engineering problems require optimizing systems for which the governing mechanisms are not explicitly known; e.g., system identification of complex physical systems \cite{Amaran16} and mechanism design \cite{Audet16}. Blackbox optimization techniques such as Nelder-Mead \cite{Nelder65}, evolutionary strategies (ES) \cite{Rechenberg73}, simulated annealing \cite{Kirkpatrick83}, genetic algorithms \cite{Holland92}, the cross-entropy method \cite{De05}, and covariance matrix adaptation (CMA) \cite{Hansen16} were developed to address such problems. Recently, the growing potential of these methods for training control policies with reinforcement learning \cite{Salimans17,Mania18,Choromanski19a,Choromanski19b,Liu19,Conti18,Chatzilygeroudis17,Ha19} has reignited interest in blackbox optimizers. In this paper, we will primarily consider the class of blackbox optimizers that fall under the purview of ES.

\noindent{\bf Evolutionary strategies for reinforcement learning.} In RL tasks, the advantages of ES -- high parallelizability, better robustness, and richer exploration -- were first demonstrated in \cite{Salimans17}. Spurred by these findings, a plethora of recent developments aimed at improving ES for RL have emerged, some of which include: explicit novelty search regularization to avoid local minima \cite{Conti18}, robustification of ES and efficient re-use of prior rollouts \cite{Choromanski19a}, and adaptive sampling for the ES gradient estimate \cite{Choromanski19b}. We remark that all the above papers focus on improving the ES MC gradient estimator. In contrast, this paper presents a method that \emph{refines} the ES gradient estimate -- regardless of where that estimate comes from -- by solving a convex program. 

\noindent{\bf Natural gradient.} Our method is directly motivated by the concept of the natural gradient \cite{Amari16}. The application of natural gradient in learning was initially pioneered in \cite{Amari98} and was later demonstrated to be effective for RL \cite{Kakade02}, deep learning with backpropagation \cite{Pascanu13}, and blackbox optimization with ES \cite{Sun09,Wierstra2014}. However, the latent potential of the natural gradient has not been completely realized due to the difficulty in estimation of the Fisher information matrix. Much of the prior work employing natural gradient has focused on efficient estimation or computation of the Fisher information matrix \cite{Wu17,Sun09,Pascanu13}. In contrast, CoNES does not work directly with the Fisher information matrix. Instead, we approximate the update direction by solving a convex program that maximizes the loss while being constrained to a KL-divergence ball around the current belief distribution; as the radius of the KL-divergence ball goes to zero, the limiting solution of this convex program corresponds to the natural gradient (see Proposition \ref{prop:nat-grad}). 

\noindent{\bf Trust-regions for blackbox optimization.} Recent work on trust region methods for blackbox optimizers \cite{Liu19,Miyashita18,Abdolmaleki17} performs updates on the belief distribution by optimizing the loss on a KL-divergence ball. However, \cite{Abdolmaleki17,Miyashita18} perform the constrained optimization on a \emph{discretization} of the belief distribution. The approach in \cite{Liu19} computes the KL-divergence for each dimension individually and bounds their maximum; the resulting optimization problem is approximated via a clipped surrogate objective similar to proximal policy optimization (PPO) \cite{Schulman17}. In contrast, we \emph{exactly} solve a KL-constrained problem whose solution approximates the natural gradient (as outlined above and formally discussed in Section \ref{subsec:background-nat-grad}) using powerful tools from convex optimization (e.g., second-order cone programming and geometric programming).

\section{Notation}

We denote a blackbox loss function by $\hat{l}:\mathcal{X}\to\mathbb{R}$ with $\mathcal{X}\subseteq\mathbb{R}^m$ as its domain. Let $P$ be a distribution on the domain $\mathcal{X}$ that signifies our \emph{belief} of where the optimal candidate for $\hat{l}$ resides. We assume that $P$ belongs to the statistical manifold $\mathcal{P}$ \cite{Suzuki14} which is a Riemannian manifold \cite{Petersen06} of probability distributions. Any point $P\in\mathcal{P}$ is expressed in the coordinates $\theta\in\mathbb{R}^n$. Rather than optimizing $\hat{l}$ directly, we will work with the loss function $l:\mathcal{P}\to\mathbb{R}$ which provides the expected loss $P \mapsto \mathbb{E}_{x\sim P}[\hat{l}(x)]$ under the belief distribution $P$. When referring to the manifold in a coordinate-free setting, we express the loss as $l:\mathcal{P}\to\mathbb{R}$, whereas, when we work with a particular coordinate system on $\mathcal{P}$, we express the loss as $l:\mathbb{R}^n\to\mathbb{R}$; the abuse of notation creates no confusion as it will always be clear from context.

The (Euclidean) gradient operator is denoted by $\nabla$; the natural gradient operator is denoted by $\tilde{\nabla}$; and the solution of CoNES is denoted by $\hat{\nabla}$. The KL-divergence between two distributions is denoted by $\kld(\cdot || \cdot)$ and the Euclidean inner product between two vectors is denoted by $\langle \cdot, \cdot \rangle$.

\section{Background}

\subsection{Natural Gradient}
\label{subsec:background-nat-grad}

It is a commonly-held belief that the steepest ascent direction for a loss function $l:\mathcal{P}\to\mathbb{R}$ is given by its gradient $\nabla l$. However, this is only true if the domain $\mathcal{P}$ is expressed in an orthonormal coordinate system in a Euclidean space. If the space $\mathcal{P}$ admits a Reimannian manifold \cite{Petersen06} structure, the steepest ascent direction is then given by the \emph{natural gradient} $\tilde{\nabla} l$ instead \cite[Section~12.1.2]{Amari16}. Besides providing the steepest ascent direction on $\mathcal{P}$, the natural gradient possesses various attractive properties: (a) natural gradient is independent of the choice of coordinates $\theta$ on the statistical manifold $\mathcal{P}$; (b) natural gradient avoids saturation due to sigmoidal activation functions \cite[Theorem~12.2]{Amari16}; (c) online natural-gradient learning is asymptotically Fisher efficient, i.e., it asymptotically approaches equality of the Cram\'er-Rao bound \cite{Amari98}. These qualities lay the foundation of our interest in leveraging the natural gradient in learning applications. In the rest of this section we will present two explicit characterizations of the natural gradient relevant to this paper.

Let $F(\theta)$ be the Fisher information matrix for the Reimannian manifold of distributions $\mathcal{P}$ described in the coordinates $\theta$; e.g., Gaussian distributions can be expressed in the coordinates $\theta=(\mu,$\texttt{vec $\circ$ upper-triangle}$(\Sigma))$ where $\mu$, $\Sigma$ denote the mean and the covariance, respectively. The natural gradient then satisfies the following relation with the Euclidean gradient:
\begin{equation}\label{eq:nat-grad-fish}
\tilde{\nabla} l(\theta) = F(\theta)^{\rm -1} \nabla l(\theta) \enspace.
\end{equation}

For the second characterization of the natural gradient we will need the Fisher-Rao norm $\|\cdot\|_F:\mathcal{P}\to[0,\infty)$ defined as $\|\theta\|_F:= \sqrt{\langle \theta, F(\theta)\theta \rangle}$ \cite[Definition~2]{Liang17}. Using this norm we can express the natural gradient as follows:

\begin{prop} \label{prop:nat-grad}
\emph{\textbf{[Adapted from \cite[Proposition~1]{Ollivier17}]}} Let $\mathcal{P}$ be a statistical manifold, each point of which is a probability distribution $P_\theta$ parameterized by $\theta$. Let $l:\mathcal{P}\to\mathbb{R}$ be a loss function which maps a probability distribution $P_\theta$ to a scalar. Then, the natural gradient $\tilde{\nabla} l(\theta)$ of the loss function computed at any $\theta$ satisfies:
\begin{flalign}
\frac{\tilde{\nabla}l(\theta)}{\|\tilde{\nabla}l(\theta)\|_F} = \lim_{\epsilon\to 0} ~ \underset{v\in\mathbb{R}^n}{\arg\max}~~& l(\theta+\epsilon v) \label{eq:nat-grad}\\
\text{s.t.}~~~& \kld(P_{\theta+\epsilon v}||P_\theta)\leq \epsilon^2/2 \nonumber \enspace.
\end{flalign}
\end{prop}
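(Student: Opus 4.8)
The plan is to collapse the $\epsilon\to 0$ limit of the constrained program into a single linear program over a Fisher--Rao norm ball, whose maximizer is available in closed form and turns out to be exactly the normalized natural gradient. The two ingredients are a first-order Taylor expansion of the objective and a second-order expansion of the KL constraint; the latter is where the Fisher information matrix enters, connecting the problem back to the characterization in \eqref{eq:nat-grad-fish}.

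First I would expand the objective as $l(\theta+\epsilon v) = l(\theta) + \epsilon\langle\nabla l(\theta), v\rangle + O(\epsilon^2)$. Since the constant $l(\theta)$ and the overall positive factor $\epsilon$ do not affect the $\arg\max$, the only part that survives in the limit is the linear functional $v\mapsto\langle\nabla l(\theta),v\rangle$. Next I would expand the constraint using the standard information-geometric fact that the KL divergence vanishes to first order and has Hessian equal to the Fisher information matrix at $\delta=0$, so that $\kld(P_{\theta+\delta}||P_\theta) = \tfrac{1}{2}\delta^{\mathrm T}F(\theta)\delta + O(\|\delta\|^3)$. Substituting $\delta=\epsilon v$ into $\kld(P_{\theta+\epsilon v}||P_\theta)\le\epsilon^2/2$ and dividing by $\epsilon^2/2$ yields $v^{\mathrm T}F(\theta)v + O(\epsilon\|v\|^3)\le 1$, so as $\epsilon\to 0$ the feasible set converges to the ellipsoid $\{v:\|v\|_F\le 1\}$ with $\|v\|_F=\sqrt{\langle v,F(\theta)v\rangle}$.

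The limiting problem is thus to maximize $\langle\nabla l(\theta),v\rangle$ over the unit Fisher--Rao ball. Rewriting the Euclidean pairing in the Fisher metric as $\langle\nabla l(\theta),v\rangle = \langle F(\theta)^{-1}\nabla l(\theta),v\rangle_F$, where $\langle a,b\rangle_F:=a^{\mathrm T}F(\theta)b$, the Cauchy--Schwarz inequality in the $F$-inner product gives $\langle\nabla l(\theta),v\rangle\le\|F(\theta)^{-1}\nabla l(\theta)\|_F\,\|v\|_F$, with equality precisely when $v$ is a positive multiple of $F(\theta)^{-1}\nabla l(\theta)$. Normalizing to $\|v\|_F=1$ produces the unique maximizer $v^\star = F(\theta)^{-1}\nabla l(\theta)/\|F(\theta)^{-1}\nabla l(\theta)\|_F$, and invoking $\tilde\nabla l(\theta)=F(\theta)^{-1}\nabla l(\theta)$ from \eqref{eq:nat-grad-fish} identifies $v^\star$ with $\tilde\nabla l(\theta)/\|\tilde\nabla l(\theta)\|_F$, the claimed right-hand side.

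I expect the main obstacle to be the rigorous interchange of the limit and the $\arg\max$, rather than the leading-order heuristic itself. The saving observation is that $F(\theta)$ is positive definite, so the limiting feasible ellipsoid is compact and $\|v\|$ stays uniformly bounded over all sufficiently small $\epsilon$; this renders the $O(\epsilon^2)$ and $O(\epsilon\|v\|^3)$ remainders uniformly negligible. With the rescaled objective converging uniformly over a sequence of compact feasible sets that converges to the limiting ellipsoid, and with the limit problem admitting a unique maximizer whenever $\nabla l(\theta)\neq 0$, standard stability results for parametric optimization deliver convergence of the maximizers and close the argument.
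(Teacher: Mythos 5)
Your proposal is correct. Note that the paper offers no proof of Proposition~\ref{prop:nat-grad} at all --- it is imported directly from \cite[Proposition~1]{Ollivier17} --- and your argument (first-order Taylor expansion of the objective, second-order expansion of the KL constraint with the Fisher information as Hessian, Cauchy--Schwarz in the Fisher inner product to identify the limiting maximizer with the normalized natural gradient via \eqref{eq:nat-grad-fish}, and a compactness/stability argument to justify passing the limit through the $\arg\max$) is essentially the standard proof underlying that cited result, so your route coincides with the one the paper implicitly relies on.
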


Proposition~\ref{prop:nat-grad} states that the natural gradient is aligned with the direction $v$ which maximizes the loss function in an infinitesimal KL-divergence ball around the current distribution $P_\theta$. To avoid confusion, it is worth clarfiying that the maximization in Proposition~\ref{prop:nat-grad} computes the natural gradient which can then be passed to a gradient-based optimizer to \emph{minimze the loss}.

\begin{rem}\label{rem:linear-nat-grad}
Proposition~\ref{prop:nat-grad} also holds true for the linear approximation of the loss function $l(\theta+\epsilon v)$ at $\theta$. Intuitively, the reason for this is that the linear approximation locally converges to the loss function for arbitrarily small $\epsilon>0$.
\end{rem}

\subsection{Natural Evolutionary Strategies}
\label{subsec:background-ES}

The evolutionary strategies (ES) framework performs a Monte-Carlo estimate of the gradient of the loss with respect to the belief distribution \cite[Section~2]{Wierstra2014}:
\begin{equation}\label{eq:grad-ES}
\nabla l(\theta) = \nabla \underset{x\sim P_\theta}{\mathbb{E}}[\hat{l}(x)] = \underset{x\sim P_\theta}{\mathbb{E}} [\hat{l}(x) \nabla \ln P_\theta(x) ] \enspace.
\end{equation} 
This gradient estimate is then supplied to a gradient-based optimizer to update the belief distribution. Note that \eqref{eq:grad-ES} provides an estimate of the Euclidean gradient. Instead of using the Euclidean gradient \eqref{eq:grad-ES}, Natural Evolutionary Strategies (NES) \cite{Wierstra2014,Sun09} estimates the natural gradient by transforming the Euclidean gradient estimate \eqref{eq:grad-ES} through \eqref{eq:nat-grad-fish}.

\section{Convex Natural Evolutionary Strategies}

Despite the various advantages offered by the natural gradient, the computationally expensive estimation of the Fisher information matrix $F(\theta)$ and its inverse makes it difficult to scale to very high-dimensional problems. Proposition~\ref{prop:nat-grad} offers an alternative to compute the natural gradient while obviating the need to estimate $F(\theta)$; however, \eqref{eq:nat-grad} is a challenging non-convex optimization problem. To develop CoNES we ``massage'' \eqref{eq:nat-grad} into an efficiently-solvable convex program.

We begin by relaxing relaxing the requirement $\lim \epsilon \to 0 $ and instead choosing a fixed $\epsilon>0$, resulting in the following optimization problem:\footnote{Without loss of generality, we are replacing $\epsilon^2/2$ with $\epsilon^2$.}
\begin{flalign}\label{eq:OPT-1}
v^*(\theta) \in \arg\max_v\{l(\theta+\epsilon v)~|~\kld(P_{\theta+\epsilon v}||P_\theta)\leq \epsilon^2, v\in\mathbb{R}^n\}, 
\end{flalign}
where $\epsilon$ is now a hyperparameter which can be as large as necessary. Using $v^*(\theta)$ as the update direction could yield faster convergence than $\tilde{\nabla}l(\theta)$. This may seem counter-intuitive because the natural gradient is the steepest ascent direction, as discussed in Section~\ref{subsec:background-nat-grad}; however, it is worth noting that this holds true only for an infinitesimal step length. The flexibility of choosing an $\epsilon$ permits us to align the search for the steepest ascent direction with the desired step-length of the update, yielding rapid convergence; see Fig.~\ref{fig:step-length} for an illustration.

We are interested in settings where the landscape of the loss function $l$ is unknown and querying loss values of individual candidates is expensive. Even if the analytical form of $l$ was available to us, \eqref{eq:OPT-1} may be a non-convex problem and hence challenging to solve. To make this problem more tractable, we perform a Taylor expansion of the loss function $l(\theta+\epsilon v) \approx l(\theta) + \langle \nabla l(\theta), \epsilon v \rangle$ and work with the following optimization problem:
\begin{flalign}\label{eq:OPT-2}
v^*(\theta) \in \arg\max_v\{l(\theta)+\langle \nabla l(\theta), \epsilon v \rangle~|~\kld(P_{\theta+\epsilon v}||P_\theta)\leq \epsilon^2, v\in\mathbb{R}^n\}.
\end{flalign}
In \eqref{eq:OPT-2}, $l(\theta)$ is a constant offset which does not affect the choice of $v$ and can hence be ignored. Further, we denote $\delta\theta:=\epsilon v$ and restate \eqref{eq:OPT-2} as:
\begin{flalign}\label{eq:OPT}
\hat{\nabla}l(\theta;\epsilon) \in \arg\max_{\delta\theta}\{\langle \nabla l(\theta), \delta\theta \rangle~|~\kld(P_{\theta+\delta\theta}||P_\theta)\leq \epsilon^2, \delta\theta\in\mathbb{R}^n\}.
\end{flalign}

Despite these relaxations, the optimization problem \eqref{eq:OPT} may still be intractable due to the lack of convexity of the feasible set. However, in the following theorem we establish for the Gaussian family of probability distributions that \eqref{eq:OPT} is convex and can be solved in \emph{polynomial time}.
\begin{theorem}
\label{thm:cvx-dist}
The optimization \eqref{eq:OPT} is:
\begin{itemize}
\item a semidefinite program (SDP) with an additional exponential cone constraint if $\mathcal{P}$ is the space of Gaussian distributions;
\item a second-order cone program (SOCP) with an additional exponential cone constraint if $\mathcal{P}$ is the space of Gaussian distributions with diagonal covariance.
\end{itemize}
\end{theorem}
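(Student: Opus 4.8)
The plan is to exploit that the objective in \eqref{eq:OPT} is \emph{linear} in $\delta\theta$, so the entire difficulty collapses onto showing that the feasible set $\{\delta\theta : \kld(P_{\theta+\delta\theta} || P_\theta)\leq\epsilon^2\}$ is convex and admits a conic representation. First I would instantiate the KL-divergence for the Gaussian family. Writing $P_\theta=\mathcal{N}(\mu,\Sigma)$ and $P_{\theta+\delta\theta}=\mathcal{N}(\mu+\delta\mu,\Sigma+\delta\Sigma)$, with $\mu,\Sigma$ fixed and $(\delta\mu,\delta\Sigma)$ the decision variables, the closed form of the Gaussian KL gives
\[
\kld(P_{\theta+\delta\theta}||P_\theta)=\tfrac{1}{2}\Big[\tr{\inv{\Sigma}(\Sigma+\delta\Sigma)}+\transp{(\delta\mu)}\inv{\Sigma}(\delta\mu)-m-\ln\det(\Sigma+\delta\Sigma)+\ln\det\Sigma\Big].
\]
The vectorized upper-triangle encoding of $\Sigma$ merely relabels coordinates and does not alter this set, so it suffices to argue conic representability in the matrix variable $\delta\Sigma$.

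The key step is a term-by-term splitting of the constraint. (i) $\tr{\inv{\Sigma}(\Sigma+\delta\Sigma)}$ is affine in $\delta\Sigma$. (ii) The Mahalanobis term $\transp{(\delta\mu)}\inv{\Sigma}(\delta\mu)$ is a convex quadratic with constant positive-definite Hessian $\inv{\Sigma}$; introducing a slack $s$ with $\transp{(\delta\mu)}\inv{\Sigma}(\delta\mu)\le s$ is a rotated second-order cone constraint. (iii) $-\ln\det(\Sigma+\delta\Sigma)$ is convex on the positive-definite cone, and together with the implicit requirement $\Sigma+\delta\Sigma\succ 0$ this is where semidefinite and exponential cones enter. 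I would introduce a slack $t$ with $t\le\ln\det(\Sigma+\delta\Sigma)$ and invoke the standard conic model of the $\log\det$ hypograph using a lower-triangular auxiliary matrix $Z$:
\[
\begin{bmatrix}\Sigma+\delta\Sigma & Z \\ \transp{Z} & \mathrm{diag}(Z)\end{bmatrix}\succeq 0,\qquad Z_{ii}\ge e^{\tau_i},\qquad \sum_{i=1}^m \tau_i \ge t,
\]
where the last two families are exactly exponential-cone constraints. Back-substituting $s$ and $t$, the KL constraint becomes the single affine inequality $\tfrac{1}{2}[\tr{\inv{\Sigma}(\Sigma+\delta\Sigma)}+s-m-t+\ln\det\Sigma]\le\epsilon^2$. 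Assembling the linear objective, this affine inequality, the positive-semidefinite block, the rotated-SOC slack, and the exponential cones yields an SDP with an additional exponential cone constraint, establishing the first bullet.

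For the diagonal case, take $\Sigma=\mathrm{diag}(\sigma_1^2,\dots,\sigma_m^2)$ with $\delta\Sigma$ constrained diagonal, and set $d_i:=\sigma_i^2+\delta\Sigma_{ii}$. Then $\tr{\inv{\Sigma}(\Sigma+\delta\Sigma)}=\sum_i d_i/\sigma_i^2$ is linear, the Mahalanobis term reduces to $\sum_i (\delta\mu_i)^2/\sigma_i^2$ (still one second-order cone slack), the definiteness requirement collapses to the linear inequalities $d_i>0$, and $\ln\det(\Sigma+\delta\Sigma)=\sum_i\ln d_i$ so the block-SDP model is no longer needed: each scalar log is captured by an exponential cone $d_i\ge e^{\tau_i}$. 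Since no semidefinite constraint survives, the program is an SOCP with an additional exponential cone constraint, proving the second bullet.

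The routine parts are the Gaussian-KL algebra and the affine and second-order-cone reductions; I expect the two genuine technical points to be the $\log\det$ hypograph representation and the \emph{exactness} of the slack reformulation. Exactness I would argue by showing the projection of the lifted feasible set onto $(\delta\mu,\delta\Sigma)$ coincides with the original KL ball: any feasible $(\delta\mu,\delta\Sigma)$ lifts by choosing $s$ equal to the quadratic and $t$ equal to $\ln\det(\Sigma+\delta\Sigma)$, while conversely any lifted-feasible point satisfies $s\ge$ quadratic and $t\le\ln\det(\Sigma+\delta\Sigma)$, so the true KL is bounded above by the affine surrogate and hence by $\epsilon^2$; as the objective does not involve $s,t$, the optimal values agree. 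Cleanly stating the $\log\det$ conic model is the main hurdle I anticipate.
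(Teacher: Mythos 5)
Your proposal is correct and follows essentially the same route as the paper's proof: since the objective of \eqref{eq:OPT} is linear, everything reduces to showing the Gaussian KL ball is convex and conic-representable, with the trace term affine, the Mahalanobis term a (rotated) second-order cone constraint, and the $\log\det$ term handled by the standard SDP-plus-exponential-cone model (which the paper simply cites rather than writing out), and with the diagonal case collapsing to scalar exponential cones so that no semidefinite block remains. The only additions on your side are the explicit block-matrix $\log\det$ hypograph construction and the exactness-of-lifting argument, which the paper leaves implicit.
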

\begin{proof}
As the objective function of \eqref{eq:OPT} is linear, we only need to verify the convexity of the feasible set. 
We will first consider the case when $\mathcal{P}$ is the space of Gaussian distributions. Let $P_{\theta+\delta\theta}=\mathcal{N}(\mu,\Sigma)$ and $P_\theta = \mathcal{N}(\mu_0,\Sigma_0)$. Then:
\begin{equation}\label{eq:cont-gauss-kld}
\kld(P_{\theta+\delta\theta}||P_\theta) = \frac{1}{2} \bigg(\text{Tr}(\Sigma_0^{-1}\Sigma) + (\mu-\mu_0)^{\rm T} \Sigma_0^{-1} (\mu-\mu_0) - \log\det(\Sigma) + \log \det(\Sigma_0) - n \bigg)
\end{equation}
which is convex because $\text{Tr}(\Sigma_0^{-1}\Sigma)$ is linear, $(\mu-\mu_0)^{\rm T} \Sigma_0^{-1} (\mu-\mu_0)$ is positive-definite quadratic, and $-\log\det(\Sigma)$ is convex. Finally, noting that $\log\det$ constraints can be formulated as an SDP with an additional exponential cone constraint \cite{logdet} completes the proof of this part.

Now we consider the family of Gaussian distributions $P_{\theta+\delta\theta}=\mathcal{N}(\mu,\Sigma)$ and $P_\theta = \mathcal{N}(\mu_0,\Sigma_0)$ with diagonal covariance. We denote the mean as $\mu=(\mu_1,\cdots,\mu_n)$ and $\mu_0=(\mu_{0,1},\cdots,\mu_{0,n})$. The diagonal elements of the covariance $\Sigma$ and $\Sigma_0$ are expressed as $(\sigma_1,\cdots,\sigma_n)$ and $(\sigma_{0,1},\cdots,\sigma_{0,n})$, respectively. Then, the KL-divergence between two distributions in this family is:
\begin{equation}\label{eq:diag-gauss-kld}
\kld(P_{\theta+\delta\theta}||P_\theta) = -\frac{1}{2} \sum_{i=1}^n \bigg(1 + \log \sigma_i^2- \log \sigma_{0,i}^2 - \frac{(\mu_i-\mu_{0,i})^2}{\sigma_{0,i}^2} - \frac{\sigma_i^2}{\sigma_{0,i}^2} \bigg) \enspace.
\end{equation}
From \eqref{eq:diag-gauss-kld}, it follows that the problem \eqref{eq:OPT} for this family of distributions is an SOCP with an additional exponential cone constraint (that arises from the $\log$ terms), completing the proof.
\end{proof}

\begin{wrapfigure}{r}{0.4\textwidth}
    \begin{center}
        \vspace{-5mm}
        \includegraphics[width=0.4\textwidth]{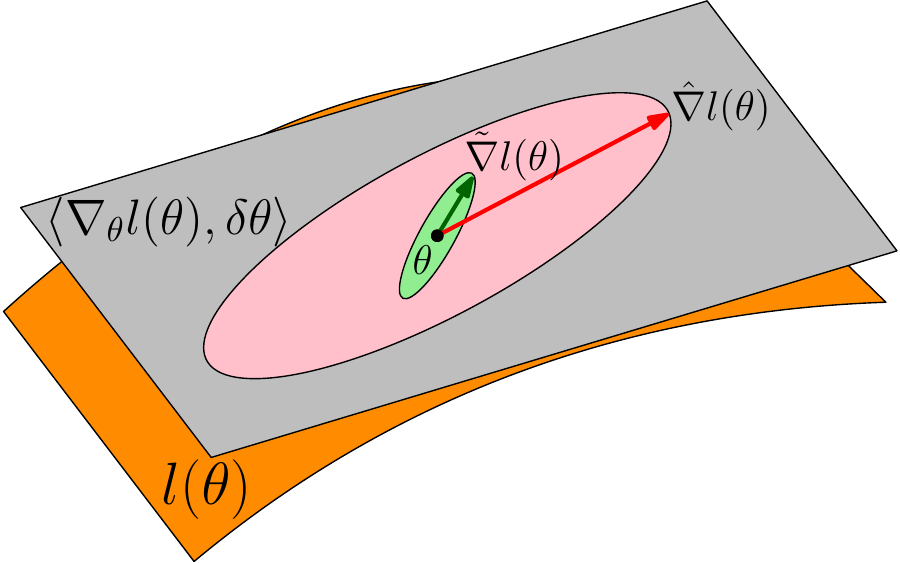}
    \end{center}
    \vspace{-3mm}
    \caption{\small Geometric illustration of CoNES. \label{fig:CoNES}}
    \vspace{-5mm}
\end{wrapfigure}
Restricting the class of belief distributions to those in Theorem~\ref{thm:cvx-dist} gives rise to CoNES: a family of convex programs that draws motivation from the concept of the natural gradient to transform the Euclidean gradient. To geometrically visualize CoNES, consider the illustration in Fig.~\ref{fig:CoNES}. The orange surface is the loss landscape and the gray surface is the linearization of the loss at the point denoted by $\theta$; in differential geometric terms, the orange surface is more accurately characterized as the manifold given by the graph of the loss $l(\theta)$ while the gray surface is the manifold's tangent space at $(\theta, l(\theta))$. The green arrow represents the solution of CoNES for a KL-divergence ball (light green region) with a very small $\epsilon$ which can also be regarded as the natural gradient (modulo the norm) at $\theta$ by Remark~\ref{rem:linear-nat-grad}. The red arrow is the solution of CoNES for a KL-divergence ball (light red region) with a larger $\epsilon$. Note that this figure is an illustration; the KL-divergence balls may not necessarily manifest in the depicted shapes. The NES gradient is the sharpest ascent direction for an infinitesimal step size, but, it may not be ideal for a larger step size. With CoNES, we can tune the scalar parameter $\epsilon$ to better align the update direction with the gradient-based optimizer's step size (learning rate), yielding faster updates. Indeed, the choice of $\epsilon$ is important to the performance of CoNES as demonstrated in our numerical results in Section~\ref{subsec:RL-results}. The mechanism for selecting (or adapting) the hyperparameter $\epsilon$ is beyond the scope of this paper and will be explored in our future work.

The psuedo-code for our implementation of CoNES as a blackbox optimizer is detailed in Algorithm~\ref{alg:CoNES}. We use the ES gradient estimate (presented in Section~\ref{subsec:background-ES}) as the \textsc{Gradient-Estimator} in Line 5 of Algorithm~\ref{alg:CoNES}; any estimator of the Euclidean gradient, such as \cite{Choromanski19a,Choromanski19b}, can be used here. We use Adam \cite{Kingma14} as our gradient-based optimizer in Line 7; any gradient-based optimizer can be used.

\begin{algorithm}[h]
\caption{CoNES \label{alg:CoNES}}
\small
\begin{algorithmic}[1]
\State Hyperparameters: radius $\epsilon$ of KL-divergence ball, number of candidates $N$ drawn at each iteration
\State Initialize: $\theta\gets\theta_0$, \textsc{Optimizer}
\Repeat
	\State $\{\hat{x}_i\}_{i=1}^N\gets$ Draw $N$ samples from the belief distribution $P_\theta$
	\State $\nabla_\theta l(\theta)\gets$ \textsc{Gradient-Estimator}($\{x_i\}_{i=1}^N,\{\hat{l}(x_i)\}_{i=1}^N$)
	\State $\hat{\nabla}_\theta l(\theta)\gets$ \textsc{CoNES}($\nabla_\theta l(\theta),\epsilon$) \Comment{solve \eqref{eq:OPT}}
	\State $\theta \gets$ \textsc{Optimizer}($\theta$, $\hat{\nabla}_\theta l(\theta)$)
\Until Termination conditions satisfied\\
\Return $\theta$
\end{algorithmic}
\normalsize
\end{algorithm}

\section{Parameterization invariance of CoNES}
An important property of the natural gradient is its independence to the parameterization of the belief distribution; e.g., for Gaussian distributions it does not matter whether we use the covariance matrix or its Cholesky decomposition. The natural gradient inherits this property by construction as the covariant gradient on the statistical manifold \cite{Amari16}. Parameterization invariance ensures that we are working with the intrinsic mathematical objects (probability distributions here) and the specific encoding of these objects will not affect the outcome. From a practical perspective, we derive the benefit of fewer properties to ``engineer''.

A natural question to ask is whether CoNES (Problem \eqref{eq:OPT}) exhibits the same property. Proposition~\ref{prop:nat-grad} ensures that the CoNES optimization exhibits this property in the limit of $\epsilon$ tending to zero, as the update direction then coincides with the natural gradient. However, establishing this property for arbitrary $\epsilon>0$ is not immediately obvious. The rest of this section is dedicated to formally demonstrating that CoNES does indeed exhibit this property.

We will work with the loss function $l$ rather than its linearization with the understanding that if the parameterization invariance holds for an arbitrary function $l$, it will automatically hold for the linear function in \eqref{eq:OPT}. With a slight abuse of notation, we will express the loss function $l:\mathbb{R}^n\to\mathbb{R}$ in the coordinates $\theta$ on the statistical manifold instead of the coordinate-free notation of $l:\mathcal{P}\to\mathbb{R}$. Now we are ready to present the main result of this section:

\begin{theorem}\label{thm:param-invar-coNES}
Consider the optimization problem:
\begin{flalign}
OPT_\theta: l_\theta^* = \max\{l(\theta+\epsilon v_\theta)~|~\kld(P_{\theta+\epsilon v_\theta}||P_\theta)\leq \epsilon^2, v_\theta\in\mathbb{R}^n\}.
\end{flalign}
Let $\Phi:\mathbb{R}^n \to\mathbb{R}^n$ be a smooth invertible mapping which performs a coordinate change from $\theta \mapsto \phi:=\Phi(\theta)$. Consider the following optimization problem $OPT_\phi$ in the new coordinates:\footnote{From a geometric perspective, $\theta$ and $\phi$ are coordinates on the statistical manifold $\mathcal{P}$, either of which can be used to express a distribution $P\in\mathcal{P}$. The directions $v_\theta$ and $v_\phi$ lie in the tangent space $T_P\mathcal{P}$ of $\mathcal{P}$ at $P$.}
\begin{flalign}
OPT_\phi: l_\phi^* = \max\{l\circ\Phi^{-1}(\phi+\epsilon v_\phi)~|~\kld(P_{\phi+\epsilon v_\phi}||P_\phi)\leq \epsilon^2, v_\phi\in\mathbb{R}^n\}.
\end{flalign} 
Then, there exists an invertible mapping $\Phi_v:\mathbb{R}^n\to\mathbb{R}^n$ such that $v_\theta^*\in\arg\max_v OPT_\theta \iff \Phi_v(v_\theta^*)\in\arg\max_v OPT_\phi$, ensuring that $l_\theta^* = l_\phi^*$.
\end{theorem}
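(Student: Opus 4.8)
The strategy is to exhibit $\Phi_v$ explicitly and to show that it sets up a value-preserving bijection between the feasible sets of $OPT_\theta$ and $OPT_\phi$; the stated equivalence of argmax sets and the equality $l_\theta^*=l_\phi^*$ then follow at once. The guiding principle is that the two programs are nothing but the \emph{same} optimization written in two coordinate systems, so everything should reduce to the coordinate-free nature of the KL-divergence. Concretely, I would take the secant-type map (for the fixed $\theta,\epsilon$ of the problem)
\[
\Phi_v(v_\theta) := \tfrac{1}{\epsilon}\bigl(\Phi(\theta+\epsilon v_\theta)-\Phi(\theta)\bigr),
\]
which is engineered precisely so that $\phi+\epsilon\,\Phi_v(v_\theta)=\Phi(\theta+\epsilon v_\theta)$. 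First I would check invertibility: $\Phi_v$ is a composition of the invertible affine map $v_\theta\mapsto\theta+\epsilon v_\theta$, the smooth invertible $\Phi$, the shift by $-\Phi(\theta)$, and scaling by $1/\epsilon$, with inverse $w\mapsto\tfrac{1}{\epsilon}\bigl(\Phi^{-1}(\phi+\epsilon w)-\theta\bigr)$.

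Next I would verify objective invariance and feasibility transfer. Using the defining identity $\phi+\epsilon\,\Phi_v(v_\theta)=\Phi(\theta+\epsilon v_\theta)$, the $OPT_\phi$ objective at $v_\phi=\Phi_v(v_\theta)$ is $l\circ\Phi^{-1}(\phi+\epsilon v_\phi)=l\circ\Phi^{-1}\bigl(\Phi(\theta+\epsilon v_\theta)\bigr)=l(\theta+\epsilon v_\theta)$, i.e., it equals the $OPT_\theta$ objective at $v_\theta$. The crux is the constraint. Under the coordinate-change convention $P_\phi:=P_{\Phi^{-1}(\phi)}$ together with $\phi=\Phi(\theta)$, we get $P_\phi=P_\theta$ and $P_{\phi+\epsilon v_\phi}=P_{\Phi^{-1}(\Phi(\theta+\epsilon v_\theta))}=P_{\theta+\epsilon v_\theta}$ as \emph{identical} distributions, not merely equal-in-divergence parameterizations. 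Hence $\kld(P_{\theta+\epsilon v_\theta}||P_\theta)$ and $\kld(P_{\phi+\epsilon v_\phi}||P_\phi)$ are the very same number, so $v_\theta$ is feasible for $OPT_\theta$ iff $\Phi_v(v_\theta)$ is feasible for $OPT_\phi$.

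With a value-preserving bijection between feasible sets established, optimizers and optimal values transfer directly, yielding $v_\theta^*\in\arg\max OPT_\theta\iff\Phi_v(v_\theta^*)\in\arg\max OPT_\phi$ and $l_\theta^*=l_\phi^*$. I expect the only genuine subtlety — and the point most worth stating explicitly — to be that $\Phi_v$ is a \emph{finite-difference secant} map rather than the Jacobian $D\Phi(\theta)$; the latter is recovered only in the limit $\epsilon\to0$, which reconciles this finite-radius statement with the standard tangent-vector transformation law underlying the natural gradient (cf.\ Remark~\ref{rem:linear-nat-grad}). Everything else is bookkeeping on the $P_\phi=P_{\Phi^{-1}(\phi)}$ convention.
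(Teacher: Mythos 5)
Your proposal is correct and follows essentially the same route as the paper: the same secant map $\Phi_v(v_\theta)=\tfrac{1}{\epsilon}\bigl(\Phi(\theta+\epsilon v_\theta)-\Phi(\theta)\bigr)$, its bijectivity, the transfer of the KL constraint via parameterization invariance of the divergence (the paper's Lemma~\ref{lem:E-theta-image}, which you obtain from the convention $P_\phi = P_{\Phi^{-1}(\phi)}$), and the value-preserving correspondence of feasible sets yielding the argmax equivalence and $l_\theta^*=l_\phi^*$. The only cosmetic differences are that you exhibit the inverse of $\Phi_v$ directly rather than arguing injectivity and surjectivity separately, and you make explicit the secant-versus-Jacobian observation, neither of which changes the substance.
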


Theorem~\ref{thm:param-invar-coNES} shows that expressing the belief distribution $P\in\mathcal{P}$ in different coordinates $\theta$ or $\phi$ provides the same optimal loss and the same set of possible outcomes (upto a bijective mapping). Of course, we cannot ensure that the outcome, i.e., the $\arg\max$ of the CoNES optimization is the same due to the potential lack of uniqueness of the optima; e.g., consider the maximization of $x_1^2+x_2^2$ in $x_1^2+x_2^2 \leq 1$ initialized at $(x_1,x_2)=(0,0)$ -- all directions $v$ from the initial point are equally good. 

Intuitively, Theorem~\ref{thm:param-invar-coNES} holds because the KL-divergence is independent of the parameterization of the distribution \cite[Corollary~4.1]{kullback51}, i.e., for $\theta$, $\phi$, and $\Phi$ as defined in Theorem~\ref{thm:param-invar-coNES}, we have: 
\begin{equation}\label{eq:kld-ind}
    \kld(P_{\theta + \epsilon v_\theta}||P_\theta) = \kld(P_{\Phi(\theta + \epsilon v_\theta)}||P_{\Phi(\theta)}) \enspace.
\end{equation}
To formally prove Theorem~\ref{thm:param-invar-coNES}, we will first establish two lemmas. The first lemma shows the existence of a bijective mapping between $v_\theta$ and $v_\phi$.  
\begin{lemma}\label{lem:phi-v}
Let $\theta$, $\phi$, and $\Phi$ be as defined in Theorem~\ref{thm:param-invar-coNES}. Then, there exists a bijective mapping $\Phi_v:\mathbb{R}^n\to\mathbb{R}^n$, defined as 
\begin{align}\label{eq:phi-v-map}
v_\theta \mapsto \frac{\Phi(\theta + \epsilon v_\theta) - \Phi(\theta)}{\epsilon}.
\end{align}
\end{lemma}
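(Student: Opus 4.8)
The plan is to observe that the map $\Phi_v$ in \eqref{eq:phi-v-map} is, for a fixed base point $\theta$ and a fixed $\epsilon>0$, nothing more than a composition of three elementary bijections of $\mathbb{R}^n$, so that its bijectivity follows immediately from the bijectivity of each factor. Concretely, I would write $\Phi_v = A_2 \circ \Phi \circ A_1$, where $A_1:v\mapsto \theta+\epsilon v$ and $A_2:z\mapsto (z-\Phi(\theta))/\epsilon$ are affine maps. Since $\epsilon>0$, both $A_1$ and $A_2$ are invertible affine bijections (their inverses being $u\mapsto(u-\theta)/\epsilon$ and $z\mapsto \Phi(\theta)+\epsilon z$, respectively), and $\Phi$ is a bijection by hypothesis. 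A composition of bijections is a bijection, which is exactly the claim.

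If instead I wanted to argue from first principles rather than via composition, I would verify injectivity and surjectivity directly. For injectivity, suppose $\Phi_v(v_1)=\Phi_v(v_2)$; multiplying by $\epsilon$ and cancelling the common term $\Phi(\theta)$ gives $\Phi(\theta+\epsilon v_1)=\Phi(\theta+\epsilon v_2)$, whence $\theta+\epsilon v_1 = \theta+\epsilon v_2$ by injectivity of $\Phi$, and then $v_1=v_2$ since $\epsilon\neq 0$. For surjectivity, given any target $w\in\mathbb{R}^n$ I would solve $\Phi(\theta+\epsilon v)=\Phi(\theta)+\epsilon w$: since $\Phi$ is onto, the point $\Phi(\theta)+\epsilon w$ has a preimage $y$, and setting $v:=(y-\theta)/\epsilon$ yields $\Phi_v(v)=w$. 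This also hands us the explicit inverse
\[
\Phi_v^{-1}(w) = \frac{\Phi^{-1}\big(\Phi(\theta)+\epsilon w\big) - \theta}{\epsilon},
\]
which I would record since it is used implicitly when transporting the constraint and objective between $OPT_\theta$ and $OPT_\phi$ in the proof of Theorem~\ref{thm:param-invar-coNES}.

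There is essentially no hard step here: the only points requiring care are that $\epsilon>0$ (so that scaling by $\epsilon$ is invertible) and that $\Phi$ is assumed invertible on all of $\mathbb{R}^n$ (so that surjectivity, and not merely injectivity, is available). The smoothness of $\Phi$ is not needed for the bijection claim itself --- it will matter only later for differentiability arguments --- so I would not invoke it here. The main conceptual point I would flag for the subsequent theorem is that $\Phi_v$ maps the tangent direction $v_\theta$ at $\theta$ to the secant direction $\big(\Phi(\theta+\epsilon v_\theta)-\Phi(\theta)\big)/\epsilon$ rather than to the derivative $D\Phi(\theta)\,v_\theta$; the two agree only as $\epsilon\to 0$, which is precisely why a finite-$\epsilon$ invariance statement requires this separate lemma instead of the usual linear pushforward of tangent vectors.
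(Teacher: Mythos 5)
Your proof is correct and essentially matches the paper's: the paper verifies injectivity and surjectivity directly, with exactly your explicit formula $v_\theta := \bigl(\Phi^{-1}(\Phi(\theta)+\epsilon v_\phi)-\theta\bigr)/\epsilon$ as the preimage witness, which is your second argument verbatim. Your composition view $\Phi_v = A_2 \circ \Phi \circ A_1$ is just a cleaner packaging of the same fact, not a different route.
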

\begin{proof}
First we will check the injectivity of $\Phi_v$:
\begin{align}
\Phi_v(v_\theta) = \Phi_v(v_\theta') \iff \Phi(\theta + \epsilon v_\theta) = \Phi(\theta + \epsilon v_\theta') \iff v_\theta = v_\theta' ~~\text{(since}~\Phi~\text{is~injective)}.
\end{align}

Next, to check the surjectivity of $\Phi_v$, let $v_\phi\in\mathbb{R}^n$ be arbitrary. Then there exists $v_\theta:= (\Phi^{-1}(\Phi(\theta) + \epsilon v_\phi) - \theta)/\epsilon$ which satisfies $\Phi_v(v_\theta) = v_\phi$. 
\end{proof}

In the following remark, we express the result of Lemma~\ref{lem:phi-v} in a form that is more conducive to our forthcoming proof.
\begin{rem}\label{rem:phi-v}
Lemma~\ref{lem:phi-v} ensures that the following relation holds for any $v_\theta\in\mathbb{R}^n$: 
\begin{align}\nonumber
v_\phi=\Phi_v(v_\theta) \iff \phi + \epsilon v_\phi = \Phi(\theta + \epsilon v_\theta) \iff \theta + \epsilon v_\theta = \Phi^{-1}(\phi + \epsilon v_\phi)
\end{align}
where the first equivalence relation holds by using the expression of $\Phi_v$ \eqref{eq:phi-v-map} and the second equivalence relations hold from the bijectivity of $\Phi$.
\end{rem}

\begin{lemma}\label{lem:E-theta-image}
Let $B_\theta:=\{v\in\mathbb{R}^n~|~\kld(P_{\theta + \epsilon v}||P_\theta)\leq \epsilon^2\}$ and $B_\phi:=\{v\in\mathbb{R}^n~|~\kld(P_{\phi + \epsilon v}||P_\phi)\leq \epsilon^2\}$ be the feasible sets of $OPT_\theta$ and $OPT_\phi$, respectively. Let $\Phi_v$ be defined as in Lemma~\ref{lem:phi-v}. Then, $B_\phi = \{\Phi_v(v)~|~v\in B_\theta\}$.
\end{lemma}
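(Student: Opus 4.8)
The plan is to exploit the bijectivity of $\Phi_v$ established in Lemma~\ref{lem:phi-v} to reduce the set equality to a pointwise equivalence of feasibility. Since $\{\Phi_v(v)\mid v\in B_\theta\}$ is precisely the image $\Phi_v(B_\theta)$, and $\Phi_v$ is a bijection, proving $B_\phi=\Phi_v(B_\theta)$ is equivalent to showing that for every $v\in\mathbb{R}^n$ one has $v\in B_\theta \iff \Phi_v(v)\in B_\phi$. This pointwise characterization is the only thing that needs to be verified.

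The key ingredient is the parameterization invariance of the KL-divergence, equation \eqref{eq:kld-ind}, combined with the identities recorded in Remark~\ref{rem:phi-v}. I would fix an arbitrary $v_\theta\in\mathbb{R}^n$ and set $v_\phi:=\Phi_v(v_\theta)$. By \eqref{eq:kld-ind}, the divergence defining membership in $B_\theta$ satisfies $\kld(P_{\theta+\epsilon v_\theta}||P_\theta)=\kld(P_{\Phi(\theta+\epsilon v_\theta)}||P_{\Phi(\theta)})$. I would then substitute the relations from Remark~\ref{rem:phi-v}, namely $\Phi(\theta+\epsilon v_\theta)=\phi+\epsilon v_\phi$ together with $\Phi(\theta)=\phi$, to conclude $\kld(P_{\theta+\epsilon v_\theta}||P_\theta)=\kld(P_{\phi+\epsilon v_\phi}||P_\phi)$.

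With this equality of divergences in hand the conclusion is immediate: the constraint $\kld(P_{\theta+\epsilon v_\theta}||P_\theta)\leq\epsilon^2$ holds if and only if $\kld(P_{\phi+\epsilon v_\phi}||P_\phi)\leq\epsilon^2$, i.e. $v_\theta\in B_\theta \iff \Phi_v(v_\theta)\in B_\phi$. Since $\Phi_v$ is a bijection by Lemma~\ref{lem:phi-v}, this equivalence upgrades to the set identity $\Phi_v(B_\theta)=B_\phi$, which is exactly the claim.

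I do not anticipate a genuine obstacle here; the lemma is essentially bookkeeping layered on top of the coordinate-invariance of the KL-divergence, and no computation with the explicit Gaussian formulae is required. The one point demanding care is notational: I must read $P_\phi$ and $P_{\phi+\epsilon v_\phi}$ as the distributions sitting at the corresponding $\phi$-coordinates, so that $P_\phi=P_{\Phi(\theta)}$ coincides with $P_\theta$ as a point of $\mathcal{P}$, and I must line up the arguments $\Phi(\theta+\epsilon v_\theta)=\phi+\epsilon v_\phi$ from Remark~\ref{rem:phi-v} correctly. Getting these substitutions exactly right is precisely what makes the two divergences coincide, and hence what drives the whole argument.
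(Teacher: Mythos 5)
Your proof is correct and takes essentially the same approach as the paper's: both rest on the KL-invariance \eqref{eq:kld-ind} combined with the identity $\phi+\epsilon\Phi_v(v_\theta)=\Phi(\theta+\epsilon v_\theta)$ from Remark~\ref{rem:phi-v}, together with the surjectivity of $\Phi_v$ from Lemma~\ref{lem:phi-v}. The only difference is organizational: you derive the divergence equality $\kld(P_{\theta+\epsilon v_\theta}||P_\theta)=\kld(P_{\phi+\epsilon \Phi_v(v_\theta)}||P_\phi)$ once and state it as a single pointwise equivalence $v_\theta\in B_\theta \iff \Phi_v(v_\theta)\in B_\phi$, whereas the paper writes out the two set inclusions separately, repeating the same computation in each direction.
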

\begin{proof}
Let $v_\phi\in\{\Phi_v(v)~|~v\in B_\theta\}$, then there exists a $v_\theta\in B_\theta$ such that $v_\phi = \Phi_v(v_\theta)$. Therefore, Remark~\ref{rem:phi-v} ensures that $\phi + \epsilon v_\phi = \Phi(\theta+\epsilon v_\theta)$, which further gives us:
\begin{align}\label{eq:E-phi-1}
\kld(P_{\phi + \epsilon v_\phi}||P_\phi) = \kld(P_{\Phi(\theta + \epsilon v_\theta)}||P_{\Phi(\theta)}) = \kld(P_{\theta + \epsilon v_\theta}||P_\theta) \leq \epsilon^2,
\end{align}
where the last equality follows from \eqref{eq:kld-ind} and the inequality follows from the fact that $v_\theta\in B_\theta$. From \eqref{eq:E-phi-1} we have that $v_\phi\in B_\phi$ implying that $\{\Phi_v(v)~|~v\in B_\theta\}\subseteq B_\phi$.

Now, let $v_\phi\in B_\phi$. By the surjectivity of $\Phi_v$ from Lemma~\ref{lem:phi-v}, there exists a $v_\theta\in\mathbb{R}^n$ such that $v_\phi = \Phi_v(v_\theta)$. With this, Remark~\ref{rem:phi-v} ensures that $\phi + \epsilon v_\phi = \Phi(\theta+\epsilon v_\theta)$. Hence, using \eqref{eq:kld-ind}, followed by $\phi + \epsilon v_\phi = \Phi(\theta+\epsilon v_\theta)$ gives:  
\begin{align}\label{eq:E-phi-2}
\kld(P_{\theta + \epsilon v_\theta}||P_\theta) = \kld(P_{\Phi(\theta + \epsilon v_\theta)}||P_{\Phi(\theta)}) = \kld(P_{\phi + \epsilon v_\phi}||P_\phi) \leq \epsilon^2
\end{align}
where the last inequality follows from the fact that $v_\phi\in B_\phi$. Therefore, by \eqref{eq:E-phi-2}, we have that $v_\theta \in B_\theta$, which, on combining with the earlier assertion that $v_\phi=\Phi_v(v_\theta)$ implies that $v_\phi\in \{\Phi_v(v)~|~v\in B_\theta\}$. Thereby, ensuring that $B_\phi\subseteq \{\Phi_v(v)~|~v\in B_\theta\}$ and completing the proof.
\end{proof}

\begin{proof}[\bf{Proof of Theorem~\ref{thm:param-invar-coNES}}]
The proof follows from the following chain of arguments:
\begin{align}
v_\theta^*\in\arg\max_v OPT_\theta & \iff l(\theta + \epsilon v_\theta^*)\geq l(\theta + \epsilon v_\theta),~\forall v_\theta\in B_\theta \\
& \iff l\circ \Phi^{-1}(\phi + \epsilon \Phi_v(v_\theta^*)) \geq l\circ \Phi^{-1}(\phi + \epsilon \Phi_v(v_\theta)), ~\forall v_\theta\in B_\theta \label{eq:param-invar-1} \\
& \iff l\circ \Phi^{-1}(\phi + \epsilon \Phi_v(v_\theta^*)) \geq l\circ \Phi^{-1}(\phi + \epsilon v_\phi), ~\forall v_\phi\in B_\phi \label{eq:param-invar-2}\\
& \iff \Phi_v(v_\theta^*)\in\arg\max_v OPT_\phi \enspace,
\end{align}
where \eqref{eq:param-invar-1} follows from Remark~\ref{rem:phi-v} (Lemma~\ref{lem:phi-v}) and \eqref{eq:param-invar-2} follows from Lemma~\ref{lem:E-theta-image}.
Further, because $l(\theta + \epsilon v_\theta^*) = l\circ \Phi^{-1}(\phi + \epsilon \Phi_v(v_\theta^*))$ from Remark~\ref{rem:phi-v}, we get $l_\theta^* = l_\phi^*$.
\end{proof}

\section{Results}
\label{sec:results}

In this section, we use CoNES on two classes of problems: (a) a standard suite of high-dimensional loss functions used to benchmark blackbox optimizers, and (b) a selection of OpenAI Gym's \cite{openaigym} MuJoCo \cite{mujoco} suite of RL tasks. We compare CoNES against existing methods including ES, natural evolutionary strategies (NES), and covariance matrix adaptation (CMA). We custom implemented ES, NES, and CoNES, while CMA is adapted directly from the open-source PyCMA package \cite{Hansen19}; our code is accessible at: \href{https://github.com/irom-lab/conES}{https://github.com/irom-lab/CoNES}. 

The family of Gaussian belief distributions with diagonal covariance is used for ES, NES, and CoNES. This family of belief distributions permits the implementation of NES \emph{exactly} (i.e., without having to numerically estimate the Fisher information matrix \cite{Sun09}) for high-dimensional problems, serving as a strong baseline to compare CoNES against. For CMA, PyCMA's default family of belief distributions -- Gaussian distributions with non-diagonal covariance -- is used. For ES, NES, and CoNES we compute an estimate of the gradient direction and pass it to the Adam optimizer \cite{Kingma14} to update the belief distribution. For each of these methods we perform antithetic sampling and rank-based fitness transformation \cite{Salimans17}. Unlike \cite{Salimans17}, we also update the variance of the belief distribution; we circumvent the non-negativeness constraint of the variance by updating the $\log$ of variance with the Adam optimizer instead. The resulting convex optimization problems for CoNES are solved using the CVXPY package \cite{CVXPY} and the MOSEK solver \cite{MOSEK}.

\subsection{Benchmark Functions}
\label{subsec:benchmark-results}

We first test our approach on four $5000$-dimensional functions: \texttt{Sphere, Rosenbrock, Rastrigin,} and \texttt{Lunacek} \cite{Hansen09} which are provided in Appendix~\ref{app:benchmark}. These functions are commonly-used benchmarks for blackbox optimization methods \cite{COCO,NEVERGRAD}. Hyperparameters for ES, NES, and CoNES are shared across all problems (see Appendix~\ref{app:hyperparams}) while the hyperparameters of CMA are the default values chosen by PyCMA. Training for these benchmark functions was performed on a desktop with a 3.30 GHz Intel i9-7900X CPU with 10 cores and 32 GB RAM. Fig.~\ref{fig:benchmark-loss} plots the average and standard deviation (shaded region) of the loss curves across 10 seeds. The rapid drop of the loss for CoNES demonstrates significant benefits in terms of the sample complexity over other methods. Fig.~\ref{fig:benchmark-stepsize} shows that the step size for CoNES is smaller than ES and NES, which coupled with its lower loss implies that the update direction for CoNES is more accurate than ES and NES. The run-time for a single seed is $\sim$1 minute for ES and NES, $\sim$5 minutes for CoNES, and $\sim$35 minutes for CMA.

\begin{figure*}[t]
\vskip -10pt
\centering
\includegraphics[width=0.24\textwidth]{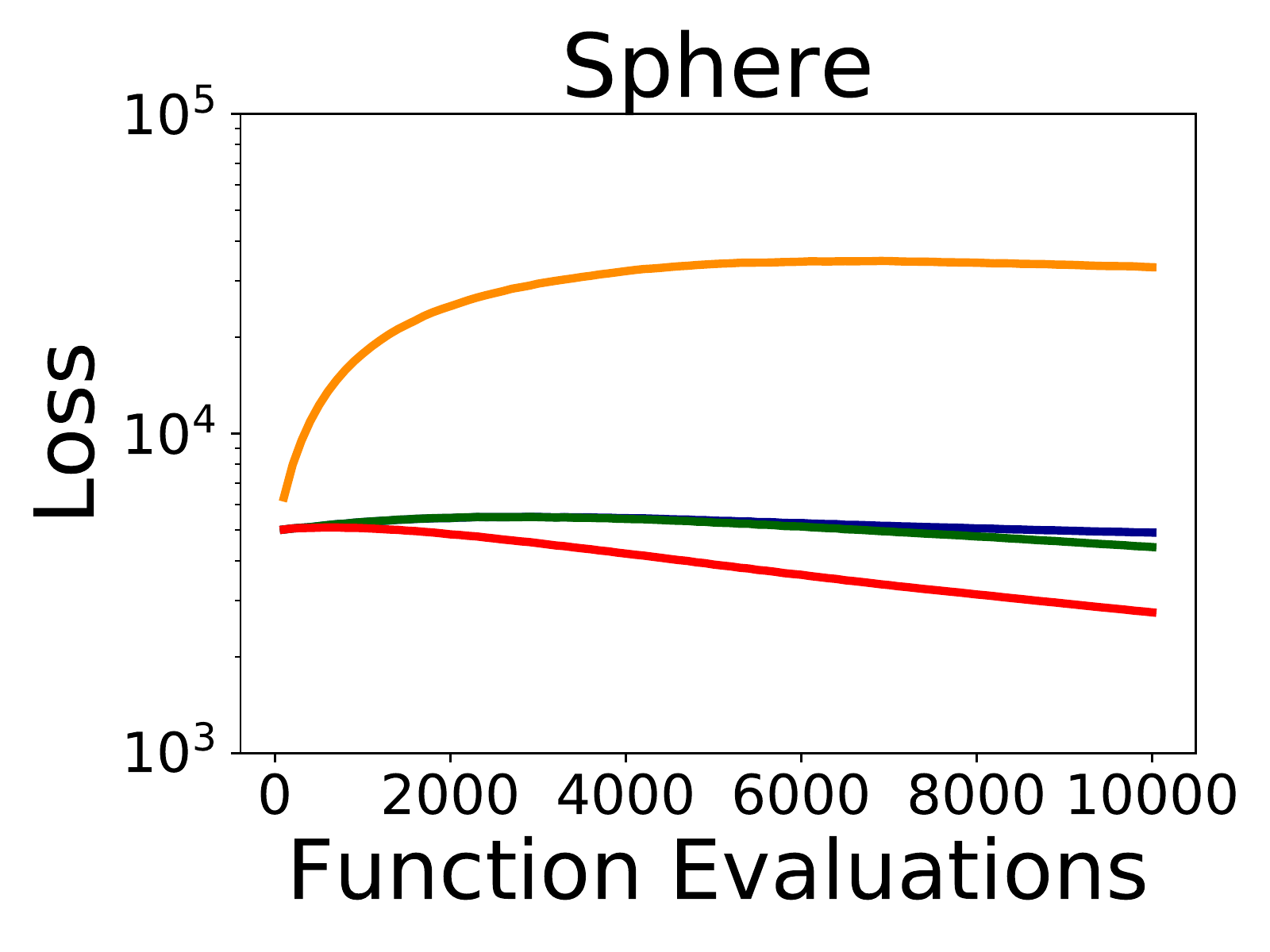}
\includegraphics[width=0.24\textwidth]{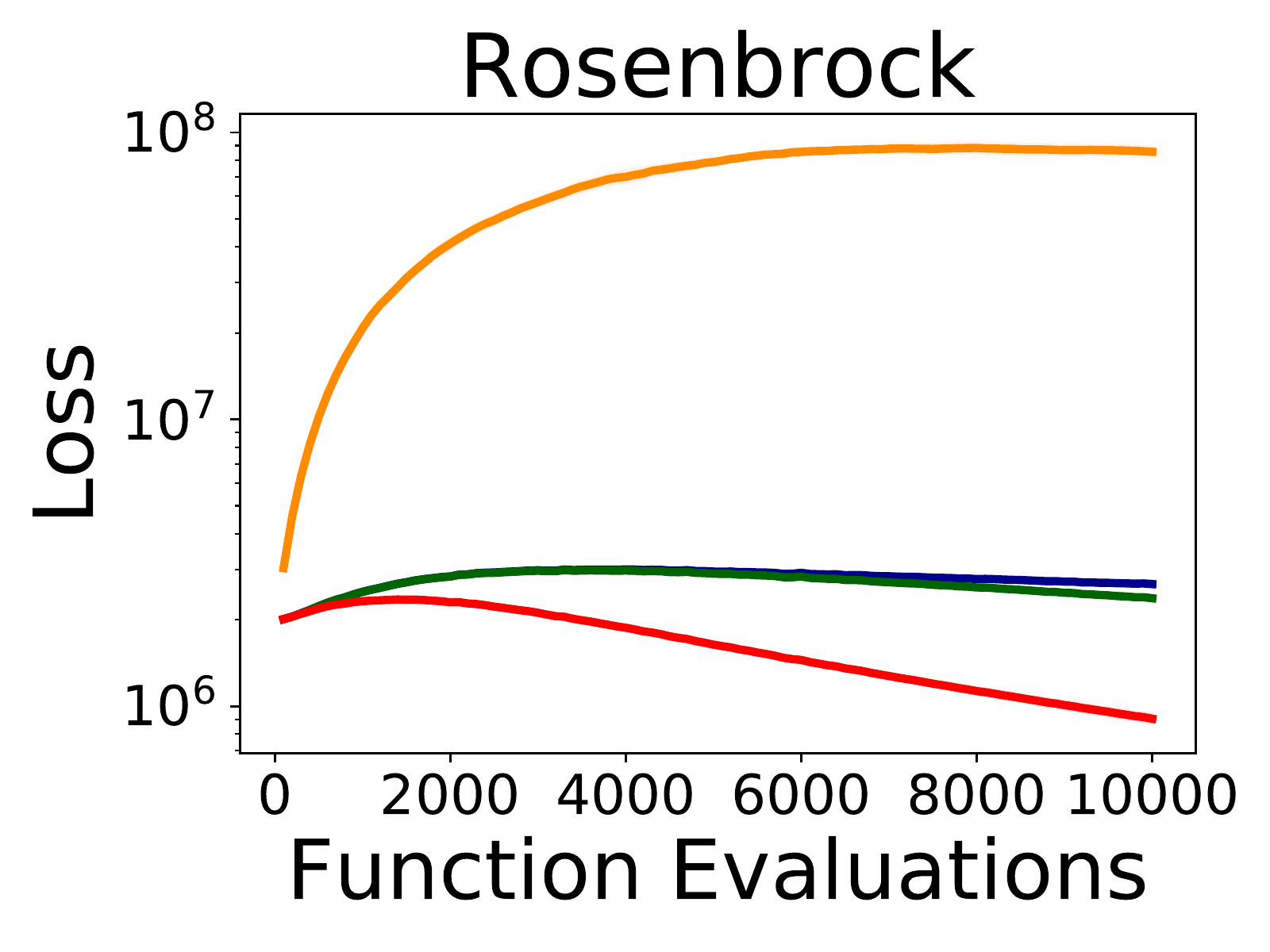}
\includegraphics[width=0.24\textwidth]{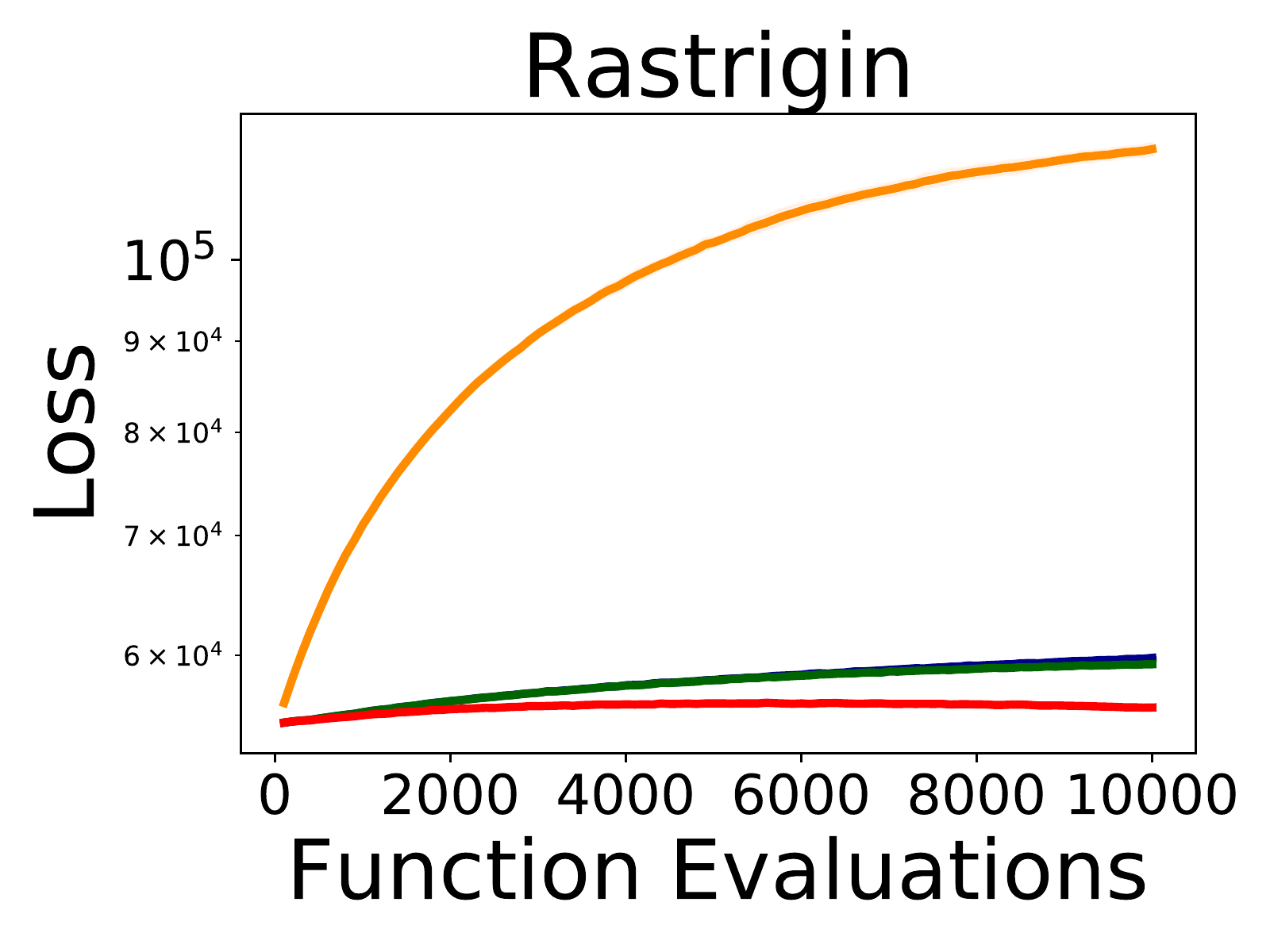}
\includegraphics[width=0.24\textwidth]{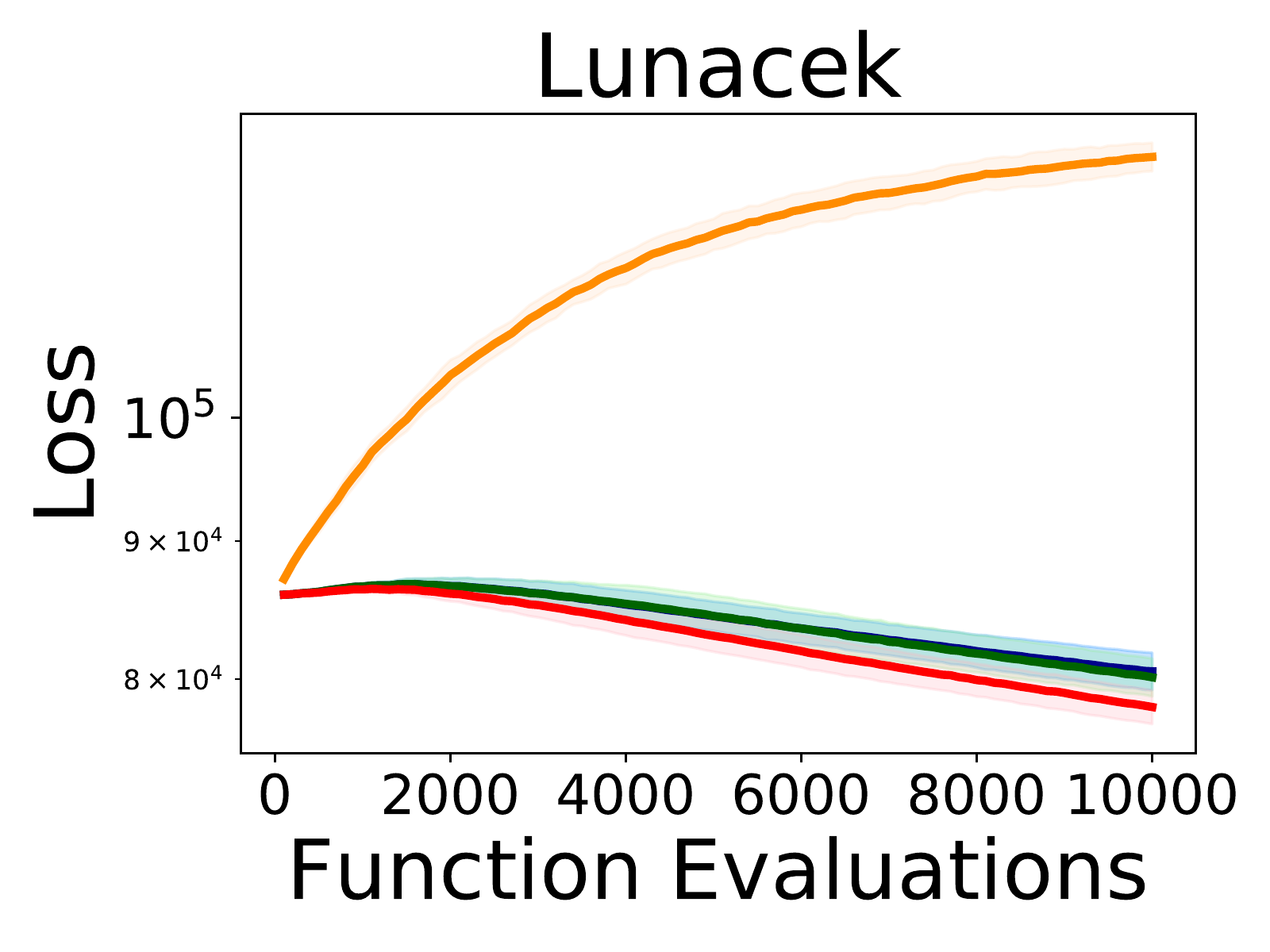}
\includegraphics[width=0.5\textwidth]{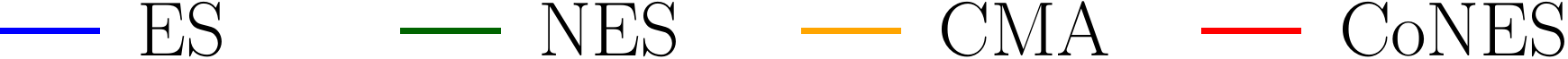}
\vskip -5pt
\caption{\small Average loss (solid curve) with standard deviation (shaded region) across 10 seeds for ES, NES, CMA, and CoNES on \texttt{Sphere, Rosenbrock, Rastrigin,} and \texttt{Lunacek}. \label{fig:benchmark-loss}}
\end{figure*}

\begin{figure*}[t]
\vskip -10pt
\centering
\includegraphics[width=0.24\textwidth]{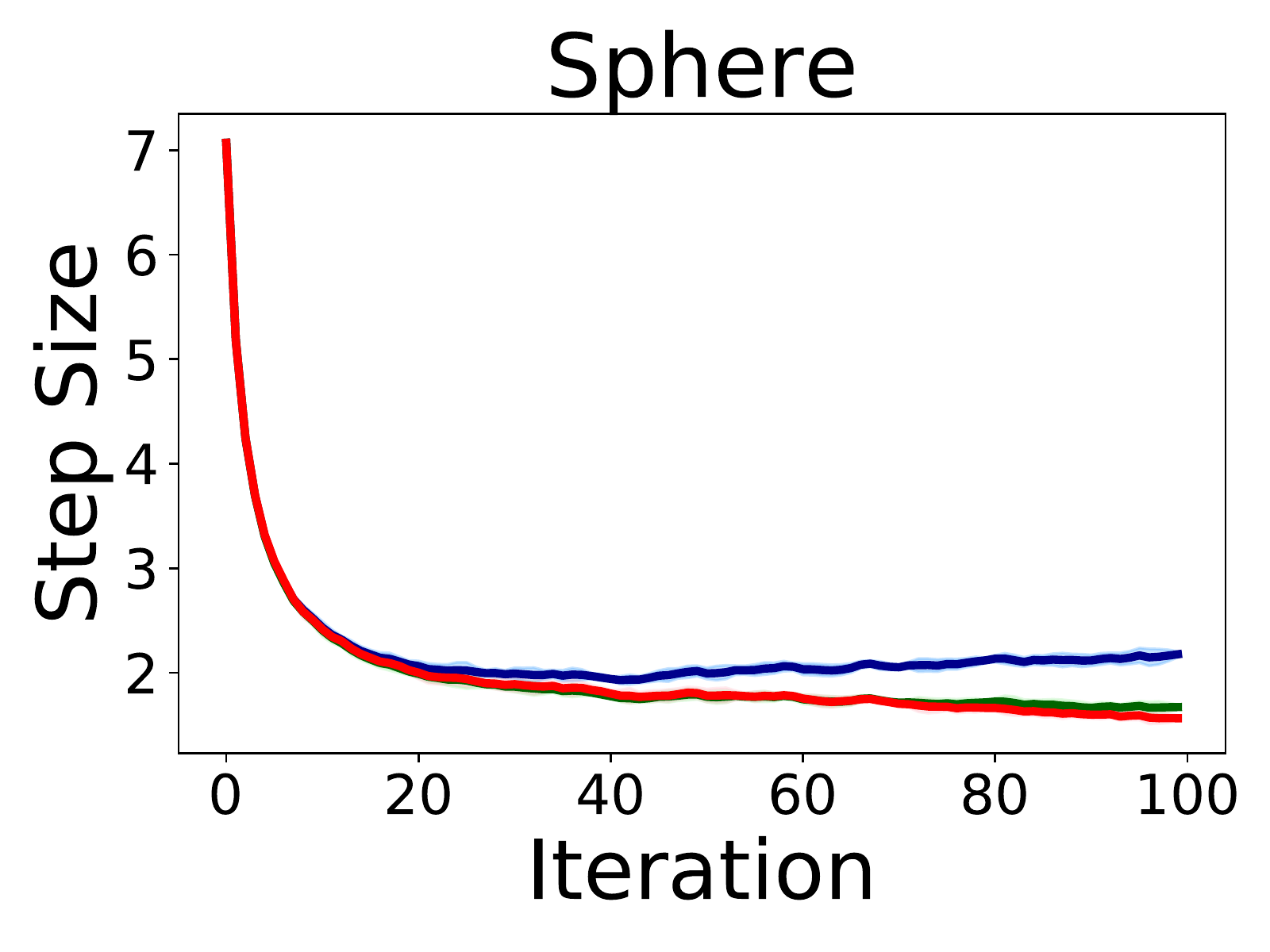}
\includegraphics[width=0.24\textwidth]{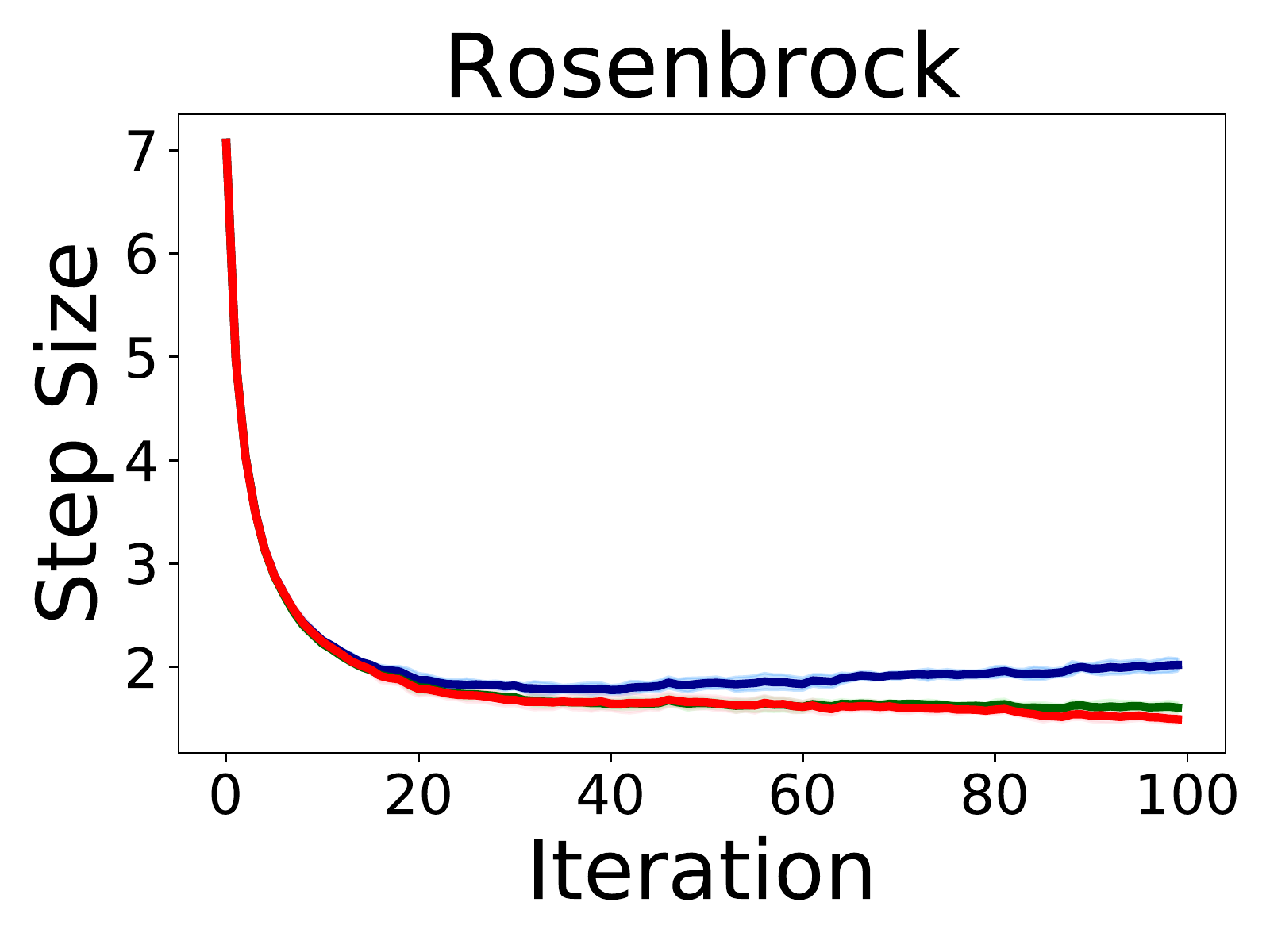}
\includegraphics[width=0.24\textwidth]{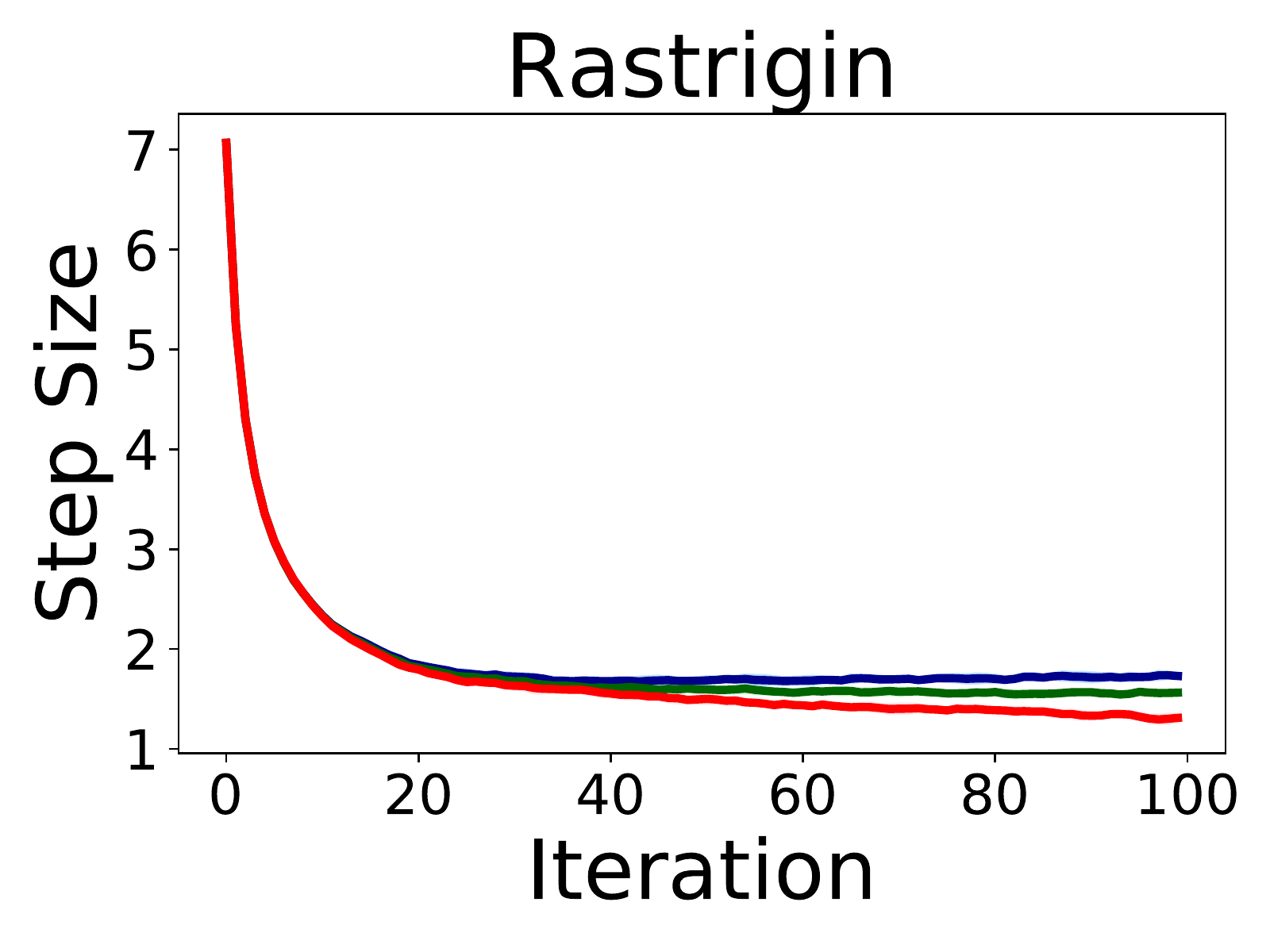}
\includegraphics[width=0.24\textwidth]{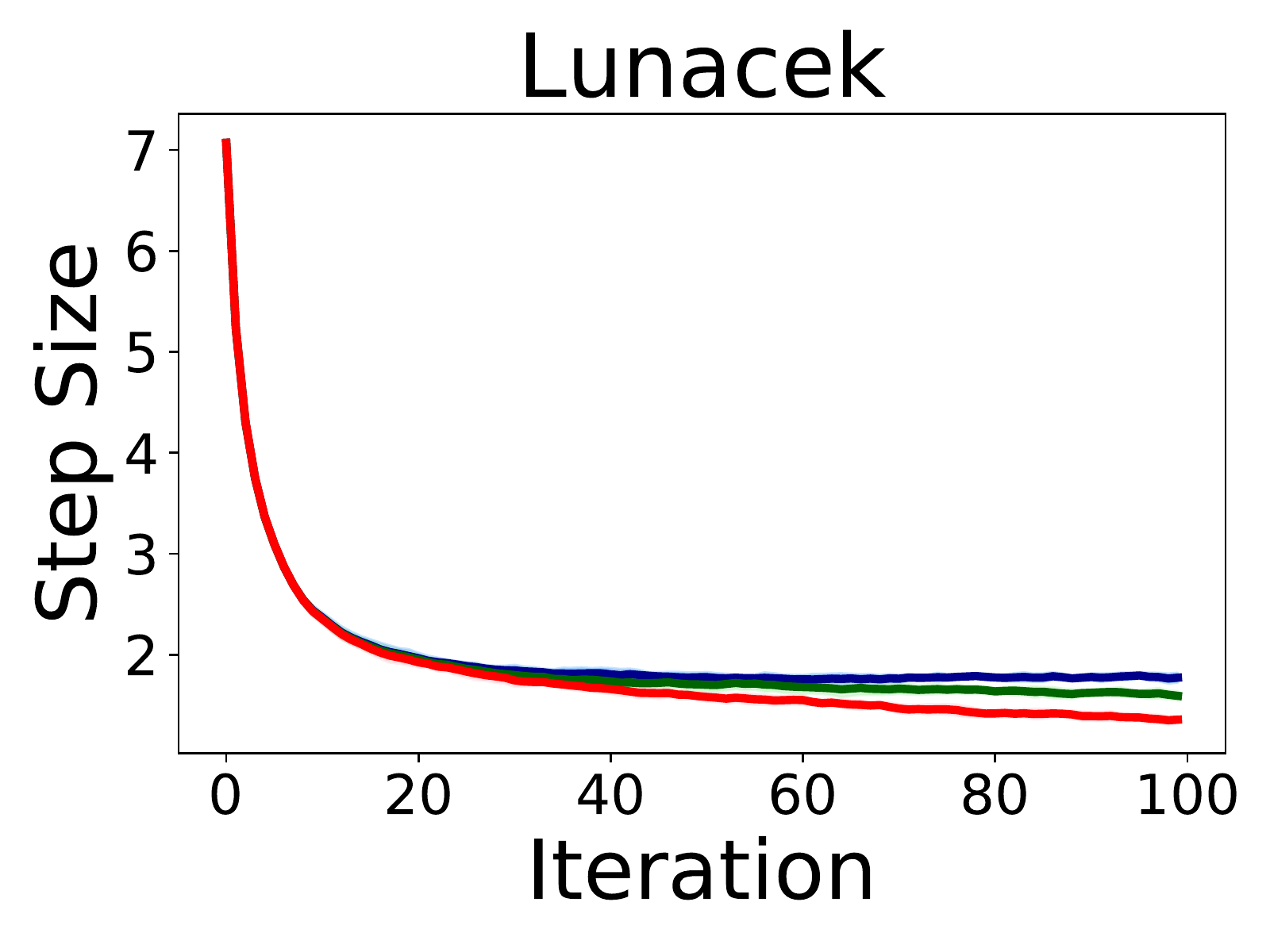}
\includegraphics[width=0.5\textwidth]{figures/legend.pdf}
\vskip -5pt
\caption{\small Average step size (solid) with standard deviation (shaded region) of the belief distribution's mean across 10 seeds for ES, NES, CMA, and CoNES on \texttt{Sphere, Rosenbrock, Rastrigin,} and \texttt{Lunacek}. \label{fig:benchmark-stepsize}}
\end{figure*}

\vspace{-0.2cm}
\subsection{Reinforcement Learning Tasks}
\label{subsec:RL-results}

Next, we benchmark our approach on the following environments from the OpenAI Gym suite of RL problems: \texttt{HalfCheetah-v2, Walker2D-v2, Hopper-v2}, and \texttt{Swimmer-v2}. We employ a fully-connected neural network policy with \texttt{tanh} activations possessing one hidden layer with 16 neurons for \texttt{Swimmer-v2} and 50 neurons for all other environments. The input to the policies are the agent's state -- which are normalized using a method similar to the one adopted by \cite{Mania18} -- and the output is a vector in the agent's action space. The training for these tasks was performed on a \texttt{c5.24xlarge} instance on Amazon Web Services (AWS).
Fig.~\ref{fig:RL-reward} presents the average and standard deviation of the rewards for each RL task across 10 seeds against the number of time-steps interacted with the environment. Fig.~\ref{fig:RL-reward} as well as Table~\ref{tab:timesteps} illustrate that CoNES performs well on all these tasks. For each environment we share the same hyperparameters (excluding $\epsilon$) between ES, NES, and CoNES; for CMA we use the default hyperparameters as chosen by PyCMA. It is worth pointing out that for RL tasks, CoNES demonstrates high sensitivity to the choice of $\epsilon$. The results for CoNES reported in Fig.~\ref{fig:RL-reward} and Table~\ref{tab:timesteps} are for the best choice of $\epsilon$ from $[\sqrt{0.1},\sqrt{1},\sqrt{10},\sqrt{100},\sqrt{1000}]$. Exact hyperparameters for the problems are provided in Appendix~\ref{app:hyperparams}. Each seed of \texttt{HalfCheetah-v2, Walker2D-v2} and \texttt{Hopper-v2}, takes $\sim$4-5 hours with ES, NES, CoNES and $\sim$10 hours with CMA. Each seed of \texttt{Swimmer-v2} takes $\sim$2 hours with ES, NES, CoNES and $\sim$4 hours with CMA.

\begin{figure*}
\vskip -10pt
\centering
\includegraphics[width=0.24\textwidth]{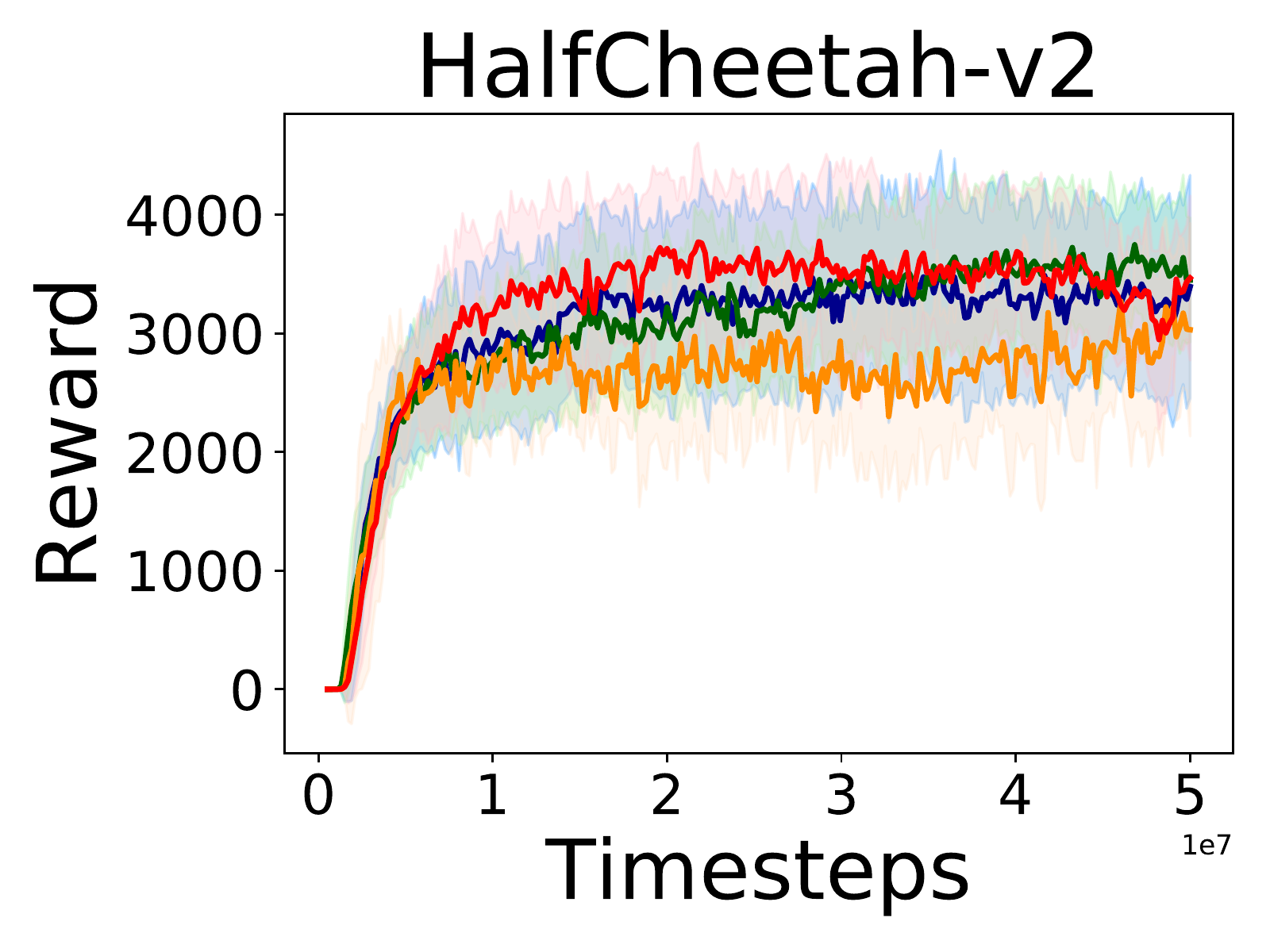}
\includegraphics[width=0.24\textwidth]{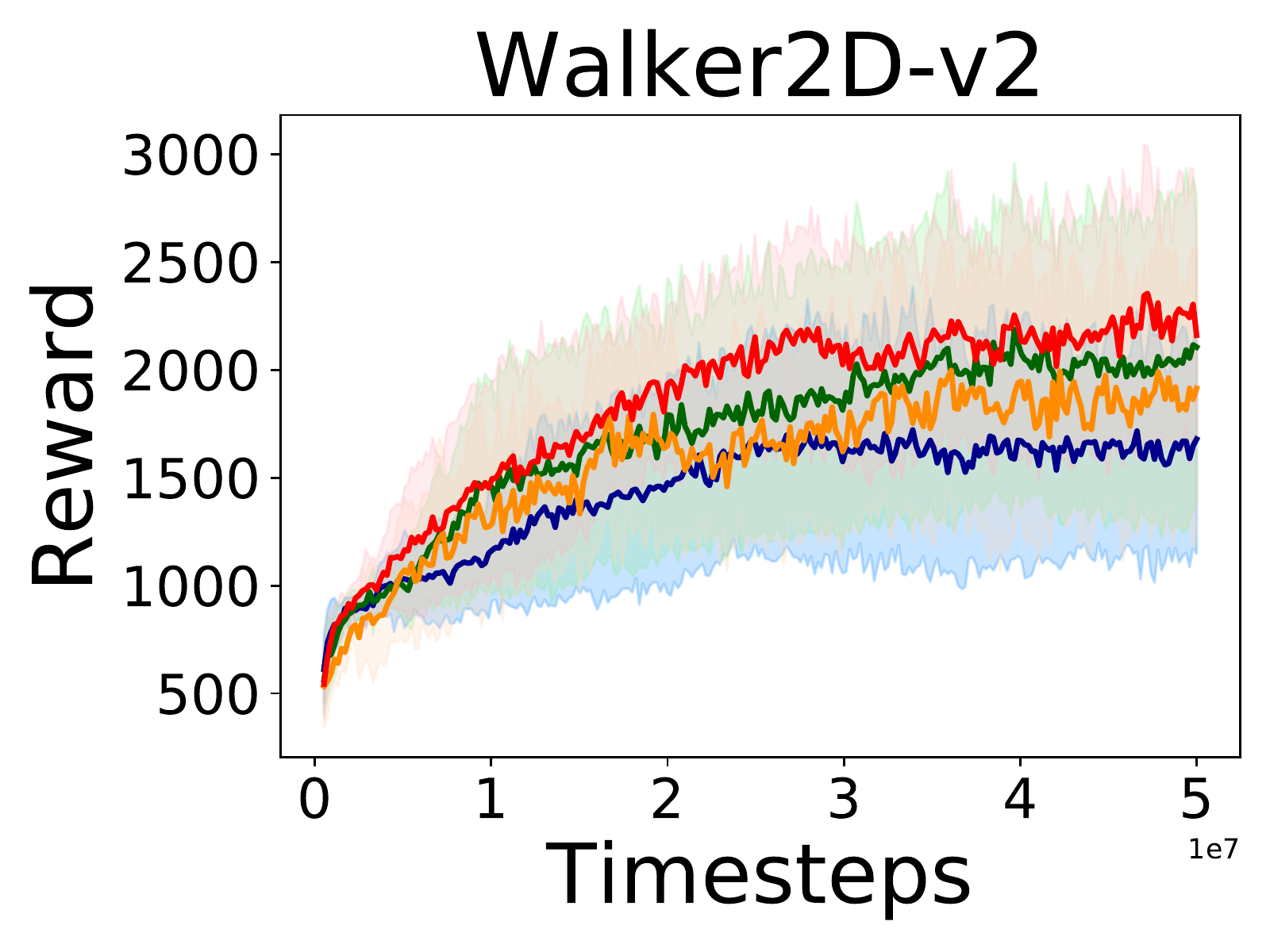}
\includegraphics[width=0.24\textwidth]{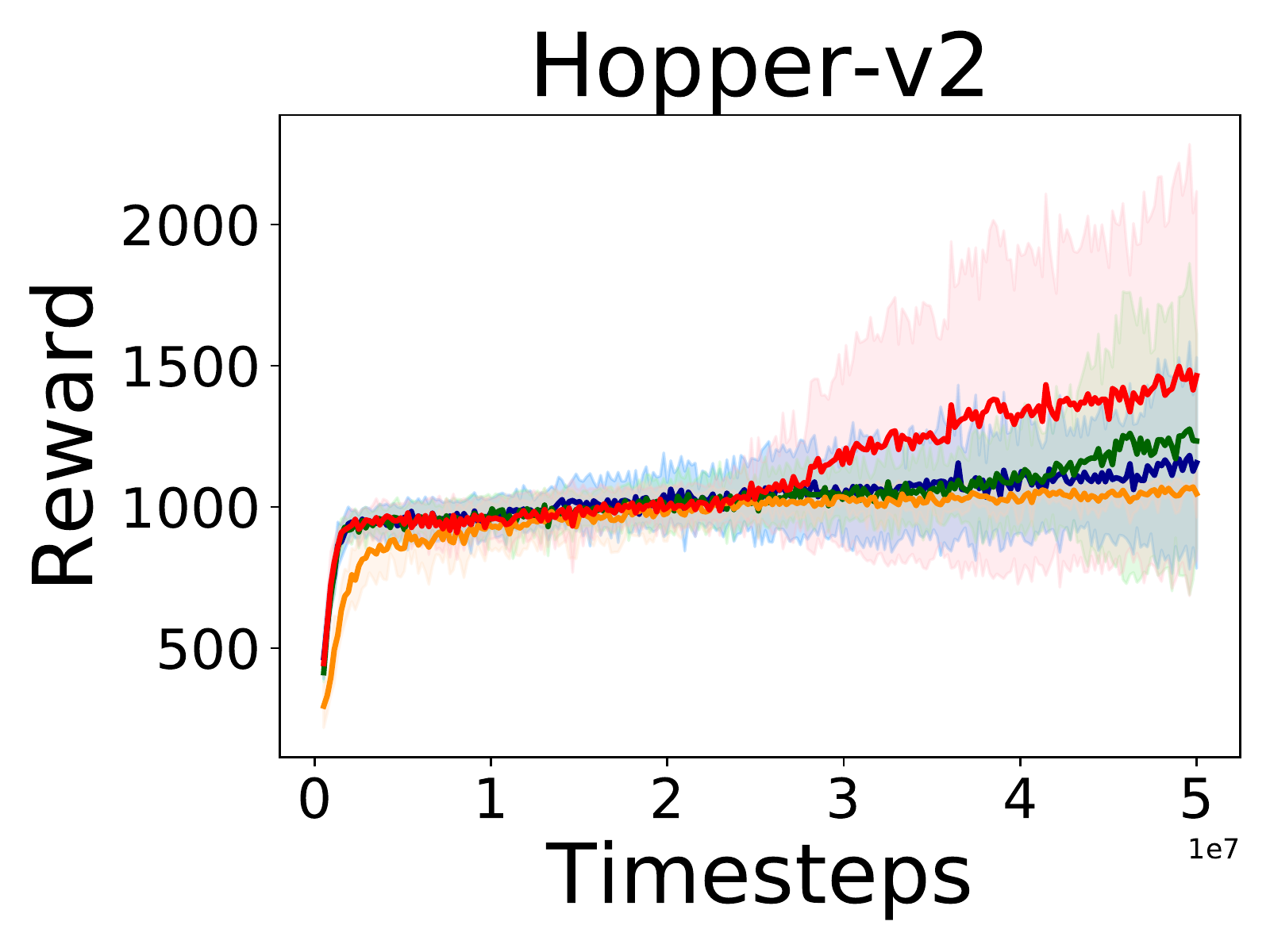}
\includegraphics[width=0.24\textwidth]{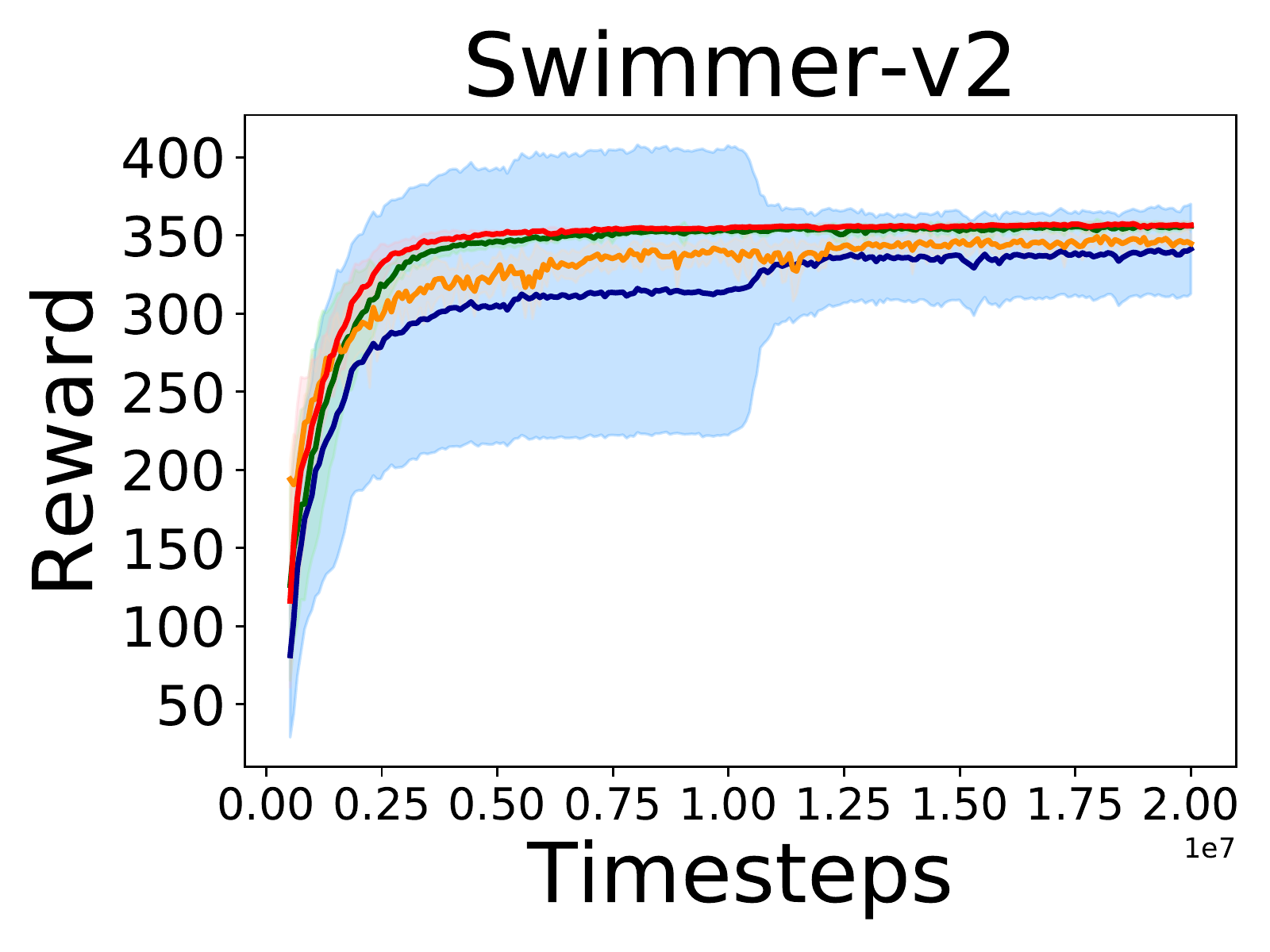}
\includegraphics[width=0.5\textwidth]{figures/legend.pdf}
\vskip -5pt
\caption{\small Average reward (solid curve) with standard deviation (shaded region) across 10 seeds for ES, NES, CMA, and CoNES on \texttt{HalfCheetah-v2, Walker2D-v2, Hopper-v2}, and \texttt{Swimmer-v2}. \label{fig:RL-reward}}
\vskip -5pt
\end{figure*}

\begin{table}
  \centering
  \begin{adjustbox}{width=1\columnwidth,center}
  \begin{tabular}{cccccc}
    \hlinewd{0.75pt}
     &  & \multicolumn{4}{c}{\# Timesteps to attain target average reward}\\
    Environments  & Target Avg. Reward & ES & NES & CMA & CoNES \\
    \hlinewd{0.75pt}
    \multicolumn{1}{l}{\rule{0pt}{3ex}\texttt{HalfCheetah-v2}} & 3500 & $3.23\times 10^7$ & $3.01\times10^7$ & -- & $\mathbf{1.40\times 10^7}$\\
    \multicolumn{1}{l}{\texttt{Walker2D-v2}} & 2000 & -- & $3.07\times 10^7$ & -- & $\mathbf{2.10\times10^7}$ \\
    \multicolumn{1}{l}{\texttt{Hopper-v2}} & 1400 & -- & -- & -- & $\mathbf{4.15\times 10^7}$ \\
    \multicolumn{1}{l}{\texttt{Swimmer-v2}} & 340 & $1.99\times 10^7$ & $3.60\times 10^6$ & $8.33\times 10^6$ & $\mathbf{3.01\times 10^6}$ \\
    \hlinewd{0.75pt}
  \end{tabular}
  \end{adjustbox}
  \vspace{2mm}
  \caption{Timesteps to attain a target average reward (over 10 seeds) for RL tasks. For each environment the timestep for the best performing blackbox method is displayed in bold. Hyphen ( -- ) is used for the method that failed to achieve the target average reward in $2\times 10^7$ timesteps for \texttt{Swimmer-v2} and $5\times 10^7$ timesteps for all other environments. \label{tab:timesteps}}
  \vspace{-5mm}
\end{table}

\section{Conclusions and Future Work}

We presented convex natural evolutionary strategies (CoNES) for optimizing high-dimensional blackbox functions. CoNES combines the notion of the natural gradient from information geometry with powerful techniques from convex optimization (e.g., second-order cone programming and geometric programming). In particular, CoNES refines a gradient estimate by solving a convex program that searches for the direction of steepest ascent in a KL-divergence ball around the current belief distribution. 
We formally established that CoNES is invariant under transformations of the belief parameterization. Our numerical results on benchmark functions and RL examples demonstrate the ability of CoNES to converge faster than conventional blackbox methods such as ES, NES, and CMA.  

\textbf{Future Work.} This paper raises numerous exciting future directions to explore. The performance of CoNES is dependent on the choice of the radius $\epsilon^2$ of the KL-divergence ball. Furthermore, a suitable choice of $\epsilon$ in one region of the loss landscape may not be suitable for another. Hence, an adaptive scheme for choosing the radius of the KL-divergence ball could substantially enhance the performance of CoNES. Another potentially fruitful future direction arises from the observation that Proposition~\ref{prop:nat-grad} --- which serves as the cornerstone of CoNES --- holds for any\footnote{This an outcome of the fact that the Hessian of all $f$-divergences is the Fisher information \cite{Makur15}.} $f$-divergence \cite{Csiszar04}. Hence, we can generalize CoNES to arbitrary $f$-divergences; this may afford greater flexibility in tuning it for the specific loss landscape and further improving performance. We can increase the flexibility afforded by CoNES even more by expanding beyond the family of Gaussian belief distributions. Finally, we are also exploring the empirical benefits of adaptively restricting the covariance matrix model \cite{Akimoto16, Choromanski19b} in order to further enhance sample complexity.

\section*{Acknowledgements}

The authors were supported by the Office of Naval Research [Award Number: N00014-18-1-2873], the Google Faculty Research Award, and the Amazon Research Award. 

\appendix
\section*{Appendix}

\section{Hyperparameters}
\label{app:hyperparams}

The parameters for the Adam optimizer were chosen according to \cite[Algorithm~1]{Kingma14} for all results in Section~\ref{sec:results}.  

\textbf{Benchmark Functions.} For all the results in Section~\ref{subsec:benchmark-results} the initial belief distribution is chosen to be the normal distribution $\mathcal{N}(0,I)$. The hyperparameters for ES, NES and CoNES were chosen as follows: the number of function evaluations performed per iteration is 100 and the learning rate for the mean and log of the variance is 0.1. Additionally, $\epsilon$ is set to 100 for CoNES.

\textbf{RL Tasks.} The hyperparameters for ES, NES, and CoNES for the results in Section~\ref{subsec:RL-results} are detailed in Table~\ref{tab:RL-hyperparams} below; some of these hyperparameters were borrowed from \cite{Pagliuca19}.

\begin{table}[h]
  \centering
  \begin{adjustbox}{width=1\columnwidth,center}
  \begin{tabular}{cccccccc}
    \hlinewd{0.75pt}
    & \multicolumn{2}{c}{Initial Distribution} & \multicolumn{2}{c}{Learning Rate} & \# policies evaluated & \# envs interacted \\
    Environments & mean ($\mu$) & std ($\sigma$)  & $\mu$ & $\log(\sigma^2)$ &  per itr (N) & per policy (m) & $\epsilon$ \\
    \hlinewd{0.75pt}
    \multicolumn{1}{l}{\rule{0pt}{3ex}\texttt{HalfCheetah-v2}} & 0 & 0.02 & 0.01 & 0.01 & 40 & 1 & $\sqrt{1000}$\\
    \multicolumn{1}{l}{\texttt{Walker2D-v2}} & 0 & 0.02 & 0.01 & 0.01 & 40 & 1 & $\sqrt{1000}$\\
    \multicolumn{1}{l}{\texttt{Hopper-v2}} & 0 & 0.02 & 0.01 & 0.01 & 40 & 1 & 1\\
    \multicolumn{1}{l}{\texttt{Swimmer-v2}} & 0 & 1 & 0.5 & 0.1 & 40 & 1 & 10\\
    \hlinewd{0.75pt}
  \end{tabular}
  \end{adjustbox}
  \vspace{2mm}
  \caption{Hyperparameters for RL tasks. \label{tab:RL-hyperparams}}
  \vspace{-5mm}
\end{table}

\section{Benchmark functions}
\label{app:benchmark}
Let $x\in\mathbb{R}^n$ be expressed in its coordinates as $x=(x_1,\cdots,x_n)$.
\begin{itemize}
    \item \texttt{Sphere}: $x \mapsto x^{\rm T}x$
    \item \texttt{Rosenbrock}: $x \mapsto \sum_{i=1}^{n-1}(100(x_i^2 - x_{i+1})^2 + (1-x_i)^2)$
    \item \texttt{Rastrigin}: $x \mapsto 10n + \sum_{i=1}^n(x_i^2 - 10 \cos(2\pi x_i))$
    \item \texttt{Lunacek}: First define the constants
    \begin{align}\nonumber
        \mu_1 = 2.5, ~~~~ s = 1- \frac{1}{2\sqrt{n+20}-8.2}, ~~~~ d=1, ~~~~ \mu_2 = -\sqrt{\frac{\mu_1^2 - d}{s}}.
    \end{align}
    Using these constants the function can be expressed as $x\mapsto \min\{ \sum_{i=1}^n(x_i - \mu_1)^2, dn + s\sum_{i=1}^n(x_i-\mu_2)^2 \} + 10\sum_{i=1}^n (1-\cos(2\pi(x_i-\mu_1)))$.
\end{itemize}

\bibliographystyle{abbrv}
\bibliography{irom}

\end{document}